\documentclass{article}

\PassOptionsToPackage{numbers, compress}{natbib}

 \usepackage[preprint]{neurips_2026}

\usepackage[utf8]{inputenc} %
\usepackage[T1]{fontenc}    %
\usepackage{hyperref}       %
\usepackage{url}            %
\usepackage{booktabs}       %
\usepackage{amsfonts}       %
\usepackage{nicefrac}       %
\usepackage{microtype}      %
\usepackage{xcolor}         %

\usepackage{amsmath}
\usepackage{amssymb}
\usepackage{mathtools}
\usepackage{amsthm}
\usepackage{bm}
\usepackage{multirow} %
\usepackage{wrapfig}

\usepackage{tikz}
\usetikzlibrary{shapes,arrows,positioning,fit,calc,decorations.pathreplacing,backgrounds,patterns,pgfplots.groupplots}

\usepackage{pgfplots}
\pgfplotsset{compat=1.17}
\usepackage{enumitem}
\setlist{nosep,leftmargin=*}
\usepackage{comment}
\usepackage{xspace}
\usepackage{nicefrac}
\usepackage{arydshln}

\usepackage{mdframed}
\definecolor{theoremcolor}{rgb}{0.94, 0.94, 0.94}
\definecolor{examplecolor}{rgb}{1, 1, 1.0}
\mdfsetup{
    backgroundcolor=theoremcolor,
    linewidth=0pt,
}
\newmdtheoremenv[linewidth=0pt,innerleftmargin=4pt,innerrightmargin=4pt]{definition}{Definition}
\newmdtheoremenv[linewidth=0pt,innerleftmargin=4pt,innerrightmargin=4pt]{proposition}{Proposition}
\newmdtheoremenv[linewidth=0pt,innerleftmargin=0pt,innerrightmargin=0pt,backgroundcolor=examplecolor]{example}{Example}
\newmdtheoremenv[linewidth=0pt,innerleftmargin=4pt,innerrightmargin=4pt]{corollary}{Corollary}
\newmdtheoremenv[linewidth=0pt,innerleftmargin=4pt,innerrightmargin=4pt]{theorem}{Theorem}
\newmdtheoremenv[linewidth=0pt,innerleftmargin=4pt,innerrightmargin=4pt]{lemma}{Lemma}
\newmdtheoremenv[linewidth=0pt,innerleftmargin=4pt,innerrightmargin=4pt]{remark}{Remark}

\definecolor{questblue}{RGB}{66,133,244}
\definecolor{entmaxgreen}{RGB}{52,168,83}
\definecolor{softmaxred}{RGB}{234,67,53}
\definecolor{flashorange}{RGB}{251,188,5}
\definecolor{dropgray}{RGB}{180,180,180}
\definecolor{h2opurple}{RGB}{153,102,204}
\definecolor{keepgreen}{RGB}{200,230,200}
\definecolor{dropred}{RGB}{255,220,220}

\newcommand{\R}{\mathbb{R}}
\newcommand{\E}{\mathbb{E}}

\newcommand{\softmax}{\text{softmax}}
\newcommand{\entmax}{\alpha\text{-entmax}}
\newcommand{\supp}{\mathrm{supp}}

\newcommand{\simark}{\mathcal{S}}  %
\newcommand{\Ikeep}{\mathcal{I}_{\text{keep}}}
\newcommand{\Idrop}{\mathcal{I}_{\text{drop}}}

\newcommand{\methodname}{EntmaxKV\xspace}

\title{\methodname: \\ Support-Aware Decoding for Entmax Attention}

\author{
\textbf{Gonçalo Duarte\textsuperscript{1}}~,~
\textbf{Miguel Couceiro\textsuperscript{1,2,3}}~,~
\textbf{Marcos V. Treviso\textsuperscript{1,2,4}}
\\
\textsuperscript{1}Instituto Superior T\'ecnico, Universidade de Lisboa. 
\textsuperscript{2}ELLIS Unit Lisbon.
\\
\textsuperscript{3}INESC-ID. 
\textsuperscript{4}Instituto de Telecomunica\c{c}\~oes.
}

\begin{document}

\maketitle

\begin{abstract}
Long-context decoding is increasingly limited by KV-cache memory traffic since each generated token attends over a cache whose size grows linearly with context length. 
Existing sparse decoding methods reduce this cost by selecting subsets of tokens or pages, but are designed for softmax attention, whose dense tails make any truncation discard nonzero probability mass. 
In contrast, $\alpha$-entmax produces exact zeros, turning sparse decoding from dense-tail approximation into support recovery: if the selected candidates contain the entmax support, sparse decoding remains exact.
While recent entmax kernels enable efficient training, they do not address the autoregressive decoding bottleneck, where dense inference still streams the full KV cache before sparsity is known. 
In this work, we introduce \methodname, an entmax-native sparse decoding framework that exploits sparsity before KV pages are loaded. \methodname combines query-aware page scoring, support-aware candidate selection, and sparse entmax attention. We analyze truncation error through the dropped probability mass $\delta$, showing that output error is controlled by $\delta$ and vanishes when the entmax support is recovered. We further introduce a Gaussian-aware entmax selector that estimates the entmax threshold from lightweight page statistics, adapting the selected budget to the score distribution.
Empirically, \methodname drops less probability mass, retains more support tokens, and achieves lower output error than softmax-based sparse decoding at matched KV budgets. On long-context and language modeling benchmarks, it closely matches full-cache entmax while using a small fraction of the KV cache, achieving up to $3.36\times$ (softmax) and $5.43\times$ (entmax) speedup over full attention baselines at 1M context length.\footnote{Code available at: \url{https://github.com/deep-spin/entmaxkv}.}
\end{abstract}

\section{Introduction}
\label{sec:intro}

Recent large language models (LLMs) support increasingly long context windows, with some systems extending to hundreds of thousands or even millions of tokens \citep{meta2025llama,team2025kimi}. This capability enables applications that require reasoning over long documents, code repositories, conversation histories, and retrieval-augmented inputs. However, long contexts come at substantial computational cost, especially during autoregressive decoding: at each generation step, the model must attend over the key-value (KV) cache containing all previous tokens. As a result, long-context decoding is often memory-bound, and KV-cache reads can account for over half of the per-token latency \citep{tang2024quest}.

A natural way to reduce this cost is to avoid reading the entire cache. Existing sparse decoding methods select only a subset of tokens, blocks, or pages for attention. Some approaches permanently discard tokens from the KV cache using positional, learned, or attention-based heuristics \citep{zhang2023ho,xiao2024efficient,oren-etal-2024-transformers,li2024snapkv}. Others keep the full cache available but select a query-dependent subset at each decoding step, using page-level bounds, clustering, or adaptive thresholds to identify the tokens that matter for the current query \citep{tang2024quest,lin2025twilight,singhania2024loki,nawrot2024dynamic,zhu2025tactic}. These methods exploit the empirical observation that attention is often concentrated on a small subset of the context. However, most sparse decoding methods are designed around softmax attention. Since softmax assigns strictly positive probability to every token, any truncation of the KV cache necessarily discards nonzero probability mass. Thus, sparse decoding under softmax attention is inherently a dense-tail approximation problem.

In this work, we revisit sparse decoding through the lens of $\alpha$-entmax attention \citep{peters-etal-2019-sparse,blondel-entmax}. Unlike softmax, $\alpha$-entmax produces input-dependent sparse probability distributions with exact zeros. This changes the goal of sparse decoding: rather than approximating a dense distribution with a truncated one, an entmax decoder only needs to recover the nonzero support. As shown in Figure~\ref{fig:concept}, if the selected KV pages contain all tokens in the entmax support, sparse decoding is exactly equal to full-cache entmax attention while avoiding the memory traffic associated with irrelevant pages.

\begin{figure}[t]
\centering
\begin{tikzpicture}[xscale=1.05, yscale=1.25]
    
    \begin{scope}[xshift=0cm]
        \node[font=\tiny, anchor=east, rotate=90] at (-0.3, 1.9) {Softmax weights};
        \draw[->] (0, 0.5) -- (5.2, 0.5);
        \draw[->] (0, 0.5) -- (0, 2.15);
        
        \foreach \x/\h in {0.2/1.2, 0.5/0.9, 0.8/0.7, 1.1/0.5, 1.4/0.35, 1.7/0.25, 2.0/0.18, 2.3/0.14, 2.6/0.11, 2.9/0.09, 3.2/0.07, 3.5/0.06, 3.8/0.05, 4.1/0.04, 4.4/0.035, 4.7/0.03} {
            \fill[softmaxred!70] (\x-0.1, 0.5) rectangle (\x+0.1, 0.5+\h);
        }
        
        \draw[dashed, thick, dropgray] (2.15, 0.4) -- (2.15, 2.15);
        \node[font=\tiny, text=dropgray] at (2.15, 0.25) {$k$};
        \fill[dropred, opacity=0.3] (2.15, 0.5) rectangle (5, 2.15);
        \node[font=\scriptsize, text=softmaxred!80!black] at (3.6, 1.25) {Dropped ($\delta^{\text{sm}} > 0$)};
        
        \node[font=\tiny] at (4.6, 0.3) {Sorted tokens};
    \end{scope}
    
    \begin{scope}[xshift=6.75cm]
        \node[font=\tiny, anchor=east, rotate=90] at (-0.3, 2.0) {$\alpha$-entmax weights};
        \draw[->] (0, 0.5) -- (5.2, 0.5);
        \draw[->] (0, 0.5) -- (0, 2.15);
        
        \foreach \x/\h in {0.2/1.25, 0.5/0.95, 0.8/0.6, 1.1/0.35, 1.4/0.15, 1.7/0.05} {
            \fill[entmaxgreen!70] (\x-0.1, 0.5) rectangle (\x+0.1, 0.5+\h);
        }
        \foreach \x in {2.0, 2.3, 2.6, 2.9, 3.2, 3.5, 3.8, 4.1, 4.4, 4.7} {
            \draw[dropgray!50] (\x, 0.5) -- (\x, 0.52);
        }
        
        \draw[dashed, thick, dropgray] (2.15, 0.4) -- (2.15, 2.15);
        \node[font=\tiny, text=dropgray] at (2.15, 0.25) {$k$};
        \fill[keepgreen, opacity=0.3] (0, 0.5) rectangle (2.15, 2.15);
        \node[font=\scriptsize, text=entmaxgreen!80!black] at (3.6, 1.25) {Dropped ($\delta^{\text{ent}} = 0$)};
        
        \draw[decorate, decoration={brace,mirror,amplitude=4pt, raise=2pt}] (0.1, 0.45) -- (1.8, 0.45);
        \node[font=\tiny] at (0.95, 0.15) {Support $\simark$};
        
        \node[font=\tiny] at (4.6, 0.3) {Sorted tokens};
    \end{scope}
    
\end{tikzpicture}
\vspace*{-0.25cm}
\caption{Softmax assigns non-zero probability mass to all tokens, so truncation always discards some mass ($\delta^{\text{sm}} > 0$). In contrast, $\alpha$-entmax concentrates probability on a sparse support $\simark$; when $\simark$ falls within the budget, $\delta^{\text{ent}} = 0$. As budget increases, the $\alpha$-entmax error drops to zero once $k \geq |\simark|$, while the softmax error remains positive for any $k$.}
\label{fig:concept}
\end{figure}

Recent work has shown that entmax is both useful for long-context modeling and practical to train at scale \citep{vasylenko2025long,goncalves2025adasplash,gonccalves2026adasplash}. However, these advances do not address the autoregressive decoding bottleneck: full entmax decoding still reads the full KV cache before discovering which tokens receive zero attention weight. We introduce \methodname, an entmax-native sparse decoding framework that exploits sparsity before KV pages are loaded. Concretely, \methodname is a paged selective decoding method for $\alpha$-entmax attention: it scores KV pages using lightweight metadata, selects candidate pages, and computes entmax attention only over the selected KV entries. Overall, our main contributions are:\looseness=-1

\begin{itemize}
    \item We formulate entmax-native sparse decoding as support recovery: unlike softmax truncation, which always drops nonzero mass, entmax sparse decoding is exact whenever the selected candidates contain the true support.

    \item We analyze sparse decoding error through the dropped probability mass $\delta$, showing that the output error is bounded by $2B\delta$ and vanishes once the entmax support is recovered.

    \item We introduce \methodname, a paged sparse decoding method that uses query-aware metadata to avoid loading KV pages unlikely to contain entmax support tokens.

    \item We propose a Gaussian-aware entmax selector that estimates the entmax threshold from lightweight page statistics and adapts the candidate budget to the score distribution.

    \item We show empirically that \methodname retains more support tokens, drops less probability mass, and achieves lower output error than softmax-based sparse decoding, while speeding up full-cache softmax and full-cache entmax decoding baselines on long sequences.
\end{itemize}

\section{Background}
\label{sec:background}

\subsection{Softmax Attention}

Let $\bm{q}\in\R^d$ be the query vector, and let $\{\bm{k}_j,\bm{v}_j\}_{j=1}^{n}$ denote the keys and values stored in the KV cache for the $n$ previous tokens. The single-head attention output is computed as follows:
\begin{align}
    s_j(\bm{q})
    =
    \frac{\bm{q}^{\top}\bm{k}_j}{\sqrt d},
    \qquad
    \bm{p}
    =
    \pi(\bm{s}(\bm{q})),
    \qquad
     \bm{o}
    =
    \sum_{j=1}^{n}
    p_j\bm{v}_j,
\end{align}
where $\pi: \mathbb{R}^n \to \triangle_n$ is a probability transformation, such as softmax.
During autoregressive decoding, each new token attends to all previously generated tokens. For a context of length $n$, computing full attention therefore requires loading $O(n)$ keys and values from GPU high-bandwidth memory into on-chip SRAM at every decoding step. This memory movement becomes a dominant cost for long contexts, even when the attention computation itself is highly optimized.

A defining property of softmax is that it is dense. That is, for any finite score vector $\bm{s}$, every output component satisfies $p_j>0$. Thus, no matter how irrelevant a token is, softmax assigns it some probability mass. Empirically, however, only a small fraction of tokens often receives significant attention mass~\citep{zhang2023ho,li2024snapkv}. This mismatch between dense probability assignments and sparse effective usage motivates decoding schemes that read and process only a subset of tokens for each query.

\subsection{The \texorpdfstring{$\alpha$}{alpha}-entmax Transformation}

$\alpha$-entmax~\citep{peters-etal-2019-sparse,blondel-entmax} is a differentiable transformation that generalizes softmax while allowing exactly sparse probability distributions. For an input score vector $\bm{s}\in\R^n$ and $\alpha>1$, $\alpha$-entmax has the form
\begin{align}
    \entmax(\bm{s})_i
    &=
    \left[
        (\alpha-1)s_i-\tau
    \right]_+^{1/(\alpha-1)},
    \label{eq:entmax-def}
\end{align}
where $[x]_+=\max\{x,0\}$ and $\tau\in\R$ is chosen so that the output sums to one. The support of the resulting distribution is
\begin{align}
    \simark
    &=
    \supp(\entmax(\bm{s}))
    =
    \left\{
        i:
        (\alpha-1)s_i>\tau
    \right\}.
    \label{eq:entmax-support}
\end{align}
Thus, tokens with scores below the learned threshold receive exactly zero probability. Special cases include the limit $\alpha\to1$, which recovers softmax, and $\alpha=2$, which recovers sparsemax~\citep{martins2016softmax}. We provide additional details on $\alpha$-entmax in Appendix~\ref{app:entmax}.

The exact sparsity of $\alpha$-entmax has two consequences for sparse decoding. First, tokens outside the support contribute exactly zero to the attention output and can be dropped without changing the result. Second, as shown in Section~\ref{sec:theory}, if an approximate method selects a candidate set that contains $\simark$, then sparse $\alpha$-entmax attention is identical to full-cache $\alpha$-entmax attention, even though only a subset of keys and values is loaded. In Section~\ref{sec:method}, we instantiate this support-recovery view in \methodname, a query-aware paged decoding pipeline for entmax attention.

\section{Theoretical Analysis}
\label{sec:theory}

We analyze sparse decoding independently of implementation details such as paging, tiling, or kernel design. Our goal is to identify what controls the approximation error when attention is computed over only a subset of tokens, and why $\alpha$-entmax differs fundamentally from softmax in this regime. Throughout this section, let $\bm{p}\in\triangle_n$ denote the full attention distribution over $n$ tokens, let $\bm{v}_1,\ldots,\bm{v}_n\in\mathbb{R}^{d_v}$ denote value vectors, and assume $\|\bm{v}_i\|_2\le B$ for all $i$.

\subsection{Approximation Error Under Truncation}
\label{sec:approx-truncation}

Consider a selection strategy that keeps a subset $\Ikeep\subseteq[n]$ and drops its complement $\Idrop=[n]\setminus\Ikeep$. Let
\begin{align}
    \delta
    &=
    \sum_{i\in\Idrop} p_i
    \label{eq:delta_approx}
\end{align}
denote the dropped probability mass. The truncated and renormalized attention distribution is
\begin{align}
    \tilde{p}_i
    &=
    \begin{cases}
        p_i/(1-\delta), & i\in\Ikeep,\\
        0, & i\in\Idrop.
    \end{cases}
\end{align}
The full and approximate attention outputs are
\begin{align}
    \bm{o}
    =
    \sum_{i=1}^{n}
    p_i\bm{v}_i,
    \quad \text{and} \quad
    \tilde{\bm{o}}
    =
    \sum_{i=1}^{n}
    \tilde{p}_i\bm{v}_i.
\end{align}

\begin{proposition}[Approximation error under truncation]
\label{prop:truncation-bound}
Let $\delta > 0$. If $\|\bm{v}_i\|_2\le B$ for all $i$, then
\begin{align}
    \|\bm{o}-\tilde{\bm{o}}\|_2
    &\le
    2B\delta.
\end{align}
Moreover, the constant is tight: equality can be attained with two tokens whose value vectors have norm $B$ and point in opposite directions.
\end{proposition}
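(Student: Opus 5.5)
Our plan is to write the error $\bm{o}-\tilde{\bm{o}}$ as the difference of two probability-weighted averages and bound it with the triangle inequality. We assume $\delta<1$ so that $\tilde{\bm{p}}$ is well defined. Splitting $\bm{o}$ over $\Ikeep$ and $\Idrop$ gives
\begin{align}
    \bm{o}-\tilde{\bm{o}}
    =
    \sum_{i\in\Ikeep}\Bigl(p_i-\frac{p_i}{1-\delta}\Bigr)\bm{v}_i
    +\sum_{i\in\Idrop}p_i\bm{v}_i
    =
    -\frac{\delta}{1-\delta}\sum_{i\in\Ikeep}p_i\bm{v}_i
    +\sum_{i\in\Idrop}p_i\bm{v}_i .
\end{align}

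Equivalently, write $\bm{o}=(1-\delta)\tilde{\bm{o}}+\delta\bm{o}_{\text{drop}}$. Here $\bm{o}_{\text{drop}}=\delta^{-1}\sum_{i\in\Idrop}p_i\bm{v}_i$ is the renormalized average over the dropped tokens, which is well defined because $\delta>0$. It follows that
\begin{align}
    \bm{o}-\tilde{\bm{o}}=\delta\,(\bm{o}_{\text{drop}}-\tilde{\bm{o}}).
\end{align}
Both $\tilde{\bm{o}}$ and $\bm{o}_{\text{drop}}$ are convex combinations of vectors with norm at most $B$, so each has norm at most $B$. The triangle inequality then gives $\|\bm{o}-\tilde{\bm{o}}\|_2\le\delta(B+B)=2B\delta$. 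This decomposition is the key step. Once the error is seen as $\delta$ times the distance between two points of the ball of radius $B$, the bound is immediate.

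For tightness, take $n=2$, $\bm{v}_1=B\bm{u}$ and $\bm{v}_2=-B\bm{u}$ for a unit vector $\bm{u}$, with $p_2=\delta$ and $p_1=1-\delta$. Keep token $1$ and drop token $2$. Then $\tilde{\bm{o}}=B\bm{u}$ and $\bm{o}=(1-2\delta)B\bm{u}$, so $\|\bm{o}-\tilde{\bm{o}}\|_2=2B\delta$ exactly. Here both triangle-inequality steps are equalities: the two averages sit at antipodal points of the sphere of radius $B$.

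We expect no real obstacle; the only subtlety is the degenerate case. When $\delta=1$ nothing is kept and $\tilde{\bm{p}}$ is undefined, so we restrict to $0<\delta<1$. The case $\delta=0$ is excluded by the statement, though there the bound holds trivially since $\tilde{\bm{o}}=\bm{o}$.
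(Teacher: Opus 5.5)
Your proof is correct, and it reaches the bound through a different decomposition than the paper's.

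The paper first computes the total-variation identity $\|\bm{p}-\tilde{\bm{p}}\|_1=2\delta$: mass $\delta$ is lost on $\Idrop$, and the same mass is regained on $\Ikeep$ through renormalization. It then applies the triangle inequality term by term, $\|\sum_i(p_i-\tilde{p}_i)\bm{v}_i\|_2\le B\|\bm{p}-\tilde{\bm{p}}\|_1$. You instead write $\bm{o}$ as the mixture $(1-\delta)\tilde{\bm{o}}+\delta\bm{o}_{\text{drop}}$, which gives the exact identity $\bm{o}-\tilde{\bm{o}}=\delta(\bm{o}_{\text{drop}}-\tilde{\bm{o}})$.

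The paper's route isolates a purely distributional fact, independent of the values, that transfers to any bounded readout of $\tilde{\bm{p}}$. Your route keeps more information. It shows the error is $\delta$ times the distance between two points in the convex hull of the values. This immediately gives the sharper bound $\delta\cdot\operatorname{diam}\{\bm{v}_i\}$. It also explains the tightness example rather than just verifying it: equality in $2B\delta$ forces $\tilde{\bm{o}}$ and $\bm{o}_{\text{drop}}$ to be antipodal points on the sphere of radius $B$.

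Your explicit restriction to $0<\delta<1$ is also a fair point. The paper's definition of $\tilde{\bm{p}}$ divides by $1-\delta$ and so assumes this only implicitly.
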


The proof is given in Appendix~\ref{app:approx-truncation}. Proposition~\ref{prop:truncation-bound} shows that truncated attention error is controlled by the mass assigned to discarded tokens. The identity of the dropped tokens matters only through their total probability mass. Thus, a sparse decoding method should select candidate tokens that make $\delta$ small.\looseness=-1

\subsection{Exact Sparse Attention with entmax}
\label{sec:exact-sparse-attention}

For softmax, $\delta>0$ whenever any token is dropped, because every token receives strictly positive probability. Therefore, exact sparse decoding under softmax is only possible when $\Ikeep=[n]$.
In contrast, $\alpha$-entmax produces exact zeros, which can lead to full support recovery under truncation.

\begin{proposition}[Exact sparse entmax attention]
\label{prop:exact-sparse}
Let $\bm{p}=\entmax(\bm{s})$ with support $\simark$. If the kept set satisfies $\simark\subseteq\Ikeep$, then $\delta=0$ and $\tilde{\bm{o}}=\bm{o}$ exactly.
\end{proposition}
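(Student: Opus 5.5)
The argument is a direct computation from the definitions of the support and of $\delta$; I would not invoke Proposition~\ref{prop:truncation-bound} here, since it assumes $\delta>0$. The first step is to show $\delta=0$. Because $\simark\subseteq\Ikeep$, taking complements gives $\Idrop=[n]\setminus\Ikeep\subseteq[n]\setminus\simark$. By \eqref{eq:entmax-support}, every $i\notin\simark$ satisfies $(\alpha-1)s_i\le\tau$. Then \eqref{eq:entmax-def} gives $p_i=[(\alpha-1)s_i-\tau]_+^{1/(\alpha-1)}=0^{1/(\alpha-1)}=0$, which is well defined because $\alpha>1$. Substituting into \eqref{eq:delta_approx}, $\delta=\sum_{i\in\Idrop}p_i=0$.

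The second step is to show $\tilde{\bm p}=\bm p$ coordinatewise. For $i\in\Ikeep$, we have $\tilde p_i=p_i/(1-\delta)=p_i$. The renormalization is legitimate because $1-\delta=1\neq 0$; equivalently, $\sum_{i\in\Ikeep}p_i=1$. For $i\in\Idrop$, we have $\tilde p_i=0=p_i$ by the first step. It follows that
\begin{align*}
\tilde{\bm o}=\sum_{i=1}^n\tilde p_i\bm v_i=\sum_{i=1}^n p_i\bm v_i=\bm o.
\end{align*}
No assumption on the value norms $\|\bm v_i\|_2\le B$ is needed.

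No step is a real obstacle. The only care needed is to state the support condition with a strict inequality and the zero condition with a non-strict one, so that tokens exactly at the threshold ($(\alpha-1)s_i=\tau$) are counted as zeros. I would also add a remark that the argument holds for any sparse map with $p_i=0$ off its support, not only for $\alpha$-entmax.
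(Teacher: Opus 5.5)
Your proof is correct and follows the paper's own argument: every dropped index lies outside $\simark$ and so has $p_i=0$, which gives $\delta=0$, hence $\tilde{\bm p}=\bm p$ and $\tilde{\bm o}=\bm o$. The extra details you add are valid but do not change the argument: routing through the threshold condition, checking that $1-\delta=1$ so the renormalization is well defined, and handling ties at the threshold.
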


The proof is given in Appendix~\ref{app:exact-sparse}. This proposition is the key difference between softmax and entmax sparse decoding. For softmax, sparse decoding is always a dense-tail approximation. For entmax, sparse decoding becomes a support-recovery problem. In other words, once the selected candidates contain the support, the sparse output becomes exact.

\paragraph{Partial support capture.}
In practice, a candidate set may not capture the entire entmax support. To separate support recovery from probability-mass recovery, we report the support retention ratio
\begin{align}
    \rho
    &=
    \frac{
        |\simark\cap\Ikeep|
    }{
        |\simark|
    }.
    \label{eq:rho_approx}
\end{align}
The ratio $\rho$ measures how many support tokens are kept, whereas $\delta$ measures how much probability mass is dropped. We note that Proposition~\ref{prop:truncation-bound} depends on $\delta$, not directly on $\rho$. Thus, a method can miss some low-mass support tokens and still have small output error.

Furthermore, the advantage of entmax over softmax under the same kept set can be expressed through the dropped mass. Let $\bm{p}^{\mathrm{sm}} = \softmax(\bm{s})$ and $\bm{p}^{\mathrm{ent}} = \entmax(\bm{s})$, and let $\simark=\supp(\bm{p}^{\mathrm{ent}})$ be the entmax support. 
For the same dropped set $\Idrop$, the difference in dropped mass is
\begin{align}
    \delta^{\mathrm{sm}}
    -
    \delta^{\mathrm{ent}}
    &=
    \underbrace{
    \sum_{i\in\Idrop\cap\simark}
    \left(
        p_i^{\mathrm{sm}}
        -
        p_i^{\mathrm{ent}}
    \right)
    }_{\text{support-miss term}}
    +
    \underbrace{
    \sum_{i\in\Idrop\setminus\simark}
    p_i^{\mathrm{sm}}
    }_{\text{softmax tail term}}.
    \label{eq:advantage-decomp}
\end{align}
The second term is always nonnegative and captures probability mass that softmax assigns to tokens outside the entmax support. The first term, however, has no fixed sign and depends on the input score distribution.
Thus, entmax's advantage is largest when the candidate set captures the entmax support, in which case the support-miss term vanishes and the dropped entmax mass becomes zero.

\section{\methodname}
\label{sec:method}

The theory above suggests a simple design principle: sparse decoding with $\alpha$-entmax should recover the entmax support while avoiding unnecessary KV-cache reads. \methodname implements this principle with a paged decoding pipeline. It scores pages using lightweight metadata, selects candidate pages, and computes $\alpha$-entmax attention only over the selected KV entries.

\subsection{Pipeline Overview}
\label{sec:pipeline}

At each decoding step and for each attention head, \methodname receives a query $\bm{q}\in\mathbb{R}^d$ and a KV cache containing $n$ previous tokens. We partition the cache into pages of size $P$, yielding $M=\lceil n/P\rceil$ pages. Let $\mathcal{P}_p\subseteq[n]$ denote the token indices in page $p$. The scaled attention score for token $j$ is
\begin{align}
    s_j(\bm{q})
    &=
    \frac{\bm{q}^{\top}\bm{k}_j}{\sqrt d}.
\end{align}

The selector chooses a set of pages $\mathcal{C}_{\mathrm{page}}(\bm{q}) \subseteq \{1,\ldots,M\}$ with corresponding token set $\mathcal{C}_{\mathrm{tok}}(\bm{q}) = \bigcup_{p\in\mathcal{C}_{\mathrm{page}}(\bm{q})} \mathcal{P}_p$.
Entmax attention is then computed over $\mathcal{C}_{\mathrm{tok}}(\bm{q})$:
\begin{align}
    \tilde{\bm{p}}
    &=
    \entmax
    \left(
        \{s_j(\bm{q}) : j\in\mathcal{C}_{\mathrm{tok}}(\bm{q})\}
    \right),
    \\
    \tilde{\bm{o}}
    &=
    \sum_{j\in\mathcal{C}_{\mathrm{tok}}(\bm{q})}
    \tilde{p}_j\bm{v}_j.
\end{align}
We set $\tilde{p}_j=0$ for $j\notin\mathcal{C}_{\mathrm{tok}}(\bm{q})$. If $\mathcal{C}_{\mathrm{tok}}(\bm{q})$ contains the full-cache entmax support, then Proposition~\ref{prop:exact-sparse} implies that $\tilde{\bm{o}}$ is exactly equal to the full-cache entmax output.

In terms of efficiency, the dominant cost is loading selected KV pages from memory. In \methodname, we obtain $O\left(|\mathcal{C}_{\mathrm{page}}|Pd\right) = O\left(|\mathcal{C}_{\mathrm{tok}}|d\right)$
instead of $O(nd)$ for dense decoding. Page metadata scoring costs $O(Md)=O((n/P)d)$ and does not load values. Thus, when $|\mathcal{C}_{\mathrm{tok}}|\ll n$, \methodname drastically reduces the dominant KV-cache memory traffic during autoregressive inference.

\subsection{Query-Aware Page Scoring}
\label{sec:page-scoring}

We use a paged KV-cache abstraction, following the block-based memory layout popularized by PagedAttention and vLLM~\citep{kwon2023efficient}. On top of this layout, we use query-aware page scoring in the spirit of Quest~\citep{tang2024quest}. Each page stores lightweight metadata that can be evaluated before loading the full keys and values.
Concretely, for each page $p$, we store coordinate-wise minimum and maximum key vectors,
\begin{align}
    \bm{k}_{\min}^{(p)}
    =
    \min_{j\in\mathcal{P}_p}
    \bm{k}_j,
    \qquad
    \bm{k}_{\max}^{(p)}
    =
    \max_{j\in\mathcal{P}_p}
    \bm{k}_j,
\end{align}
where the min and max are taken element-wise. Given a query $\bm{q}$, the maximum possible score of any token in page $p$ is upper-bounded by
\begin{align}
    \bar{s}_{\mathrm{box}}^{(p)}(\bm{q})
    &=
    \frac{1}{\sqrt d}
    \sum_{i=1}^{d}
    \max
    \left(
        q_i k_{\min,i}^{(p)},
        q_i k_{\max,i}^{(p)}
    \right).
    \label{eq:box-page-bound}
\end{align}
For the Gaussian-aware entmax selector (presented next), we additionally store first and second coordinate-wise moments, along with their associated standard deviation:
\begin{align}
    \bm{k}_{\mathrm{avg}}^{(p)}
    =
    \frac{1}{|\mathcal{P}_p|}
    \sum_{j\in\mathcal{P}_p}
    \bm{k}_j,
    \qquad
    \bm{m}_2^{(p)}
    =
    \frac{1}{|\mathcal{P}_p|}
    \sum_{j\in\mathcal{P}_p}
    \bm{k}_j\odot\bm{k}_j,
    \qquad
    \bm{k}_{\mathrm{std}}^{(p)}
    =
    \sqrt{
        \bm{m}_2^{(p)}
        -
        \bm{k}_{\mathrm{avg}}^{(p)}
        \odot
        \bm{k}_{\mathrm{avg}}^{(p)}
    }.
\end{align}
These statistics help us approximate the query-dependent score mean and variance within each page without loading individual keys.

\subsection{Candidate Selection Policies}
\label{sec:candidate-selection}

Given the page scores $\{\bar{s}_{\mathrm{box}}^{(p)}(\bm{q})\}_{p=1}^{M}$, \methodname selects a subset of pages
$\mathcal{C}_{\mathrm{page}}(\bm{q})\subseteq\{1,\ldots,M\}$. We propose two different policies for page selection.

\paragraph{Top-$k$ page selection.}
The fixed-budget policy selects the $k$ pages with largest page scores:
\begin{align}
    \mathcal{C}_{\mathrm{page}}^{\mathrm{top}\text{-}k}(\bm{q})
    &=
    \operatorname{TopK}_{k}
    \left(
        \left\{
            \bar{s}_{\mathrm{box}}^{(p)}(\bm{q})
        \right\}_{p=1}^{M}
    \right).
\end{align}
Here $\operatorname{TopK}_{k}$ returns the indices of the $k$ largest entries in its input. The token budget is at most $kP$.

\paragraph{Gaussian-aware entmax selection.}
Top-$k$ and top-$p$ are generic page-selection rules. Entmax gives a more specific target: select pages likely to contain tokens above the entmax threshold. For page $p$, we approximate the distribution of token scores inside the page by a scalar Gaussian random variable
$S^{(p)}\sim\mathcal{N}(\mu_p(\bm{q}),\sigma_p^2(\bm{q}))$, where
\begin{align}
    \mu_p(\bm{q})
    &=
    \frac{
        \bm{q}^{\top}
        \bm{k}_{\mathrm{avg}}^{(p)}
    }{
        \sqrt d
    },
    \qquad
    \sigma_p^2(\bm{q})
    =
    \frac{1}{d}
    \sum_{i=1}^{d}
    q_i^2
    \left(
        k_{\mathrm{std},i}^{(p)}
    \right)^2.
\end{align}
Here $S^{(p)}$ should be interpreted as the score of a randomly chosen token from page $p$. This diagonal approximation ignores cross-coordinate covariance, but it can be computed using only page metadata.
Recall that entmax probabilities have the form
\begin{align}
    g_\alpha(s;\tau)
    &=
    \left[
        (\alpha-1)s-\tau
    \right]_+^{1/(\alpha-1)},
\end{align}
where the exact threshold $\tau$ is chosen so that $\sum_{i} g_\alpha(s_i;\tau) = 1$.
Computing this sum exactly would require visiting every token score. Instead, we replace the empirical score distribution within each page by its Gaussian approximation. Under this model, the expected contribution of one token in page $p$ at threshold $\tau$ is
$\E_{S^{(p)}}[g_\alpha(S^{(p)};\tau)]$. Summing over the expected contribution of whole pages gives the approximate normalization equation
\begin{align}
    \sum_{p=1}^{M}
    |\mathcal{P}_p| \,
    \E_{S^{(p)}}
    \left[
        g_\alpha(S^{(p)};\hat{\tau})
    \right]
    &=
    1.
    \label{eq:gaussian-threshold-main}
\end{align}
Solving Eq.~\eqref{eq:gaussian-threshold-main} gives an estimate $\hat{\tau}$ of the full entmax threshold using only page-level moments. For $\alpha\in\{4/3,\,3/2,\,2\}$, the expectation has a closed form in terms of truncated Gaussian moments, as shown in Appendix~\ref{app:detg-closed-form}.\footnote{For these values, $\beta=1/(\alpha-1)$ is an integer: $\beta=1$ for $\alpha=2$, $\beta=2$ for $\alpha=3/2$, and $\beta=3$ for $\alpha=4/3$. The expectation then reduces to truncated Gaussian moments of first, second, and third order, respectively.} Each root-finding iteration costs constant time per page, since each page contributes only through $\mu_p$ and $\sigma_p$.
Given $\hat{\tau}$, we next estimate whether a page may contain at least one token above the threshold. Under the Gaussian model, the probability that all scores in page $p$ are below a value $b$ is approximately
\begin{align}
    \Pr
    \left[
        \max_{j\in\mathcal{P}_p}
        s_j
        \le
        b
    \right]
    &\approx
    \Phi
    \left(
        \frac{b-\mu_p}{\sigma_p}
    \right)^{|\mathcal{P}_p|}.
\end{align}
Therefore, the value
\begin{align}
    \bar{s}_{\mathrm{Gauss}}^{(p)}
    &=
    \mu_p
    +
    \sigma_p
    \Phi^{-1}
    \left(
        q_{\mathrm{page}}^{1/|\mathcal{P}_p|}
    \right)
    \label{eq:gaussian-page-bound-main}
\end{align}
is a $q_{\mathrm{page}}$-confidence estimate of the maximum score in page $p$.
The exponent $1/|\mathcal{P}_p|$ appears because the confidence is for the maximum over the entire page, not for a single token.
Finally, we select pages whose estimated maximum is above the estimated entmax threshold:
\begin{align}
    \mathcal{C}_{\mathrm{page}}^{\mathrm{Gauss}}(\bm{q})
    &=
    \left\{
        p:
        (\alpha-1)
        \bar{s}_{\mathrm{Gauss}}^{(p)}
        >
        \hat{\tau}
        -
        \Delta
    \right\}.
    \label{eq:gaussian-selector-main}
\end{align}
Here $\Delta\ge0$ is a safety margin: larger $\Delta$ selects more pages and reduces false negatives. We note that unlike the deterministic box bound in Eq.~\eqref{eq:box-page-bound}, $\bar{s}_{\mathrm{Gauss}}^{(p)}$ is not a worst-case upper bound; instead, it is a model-based estimate that aligns page selection with the entmax support condition $(\alpha-1)s_i>\tau$. 

\subsection{Triton Kernel}
\label{sec:triton}

We implement \methodname with custom Triton kernels that operate directly on a paged KV cache. The implementation has two variants, corresponding to the two selection policies described above. 

\paragraph{Entmax top-$k$.}
Following Quest~\citep{tang2024quest}, the top-$k$ variant first launches a criticality-estimation kernel that computes page scores from lightweight page metadata. We then select the top-$k$ pages and pass their indices to the sparse decoding kernel. The kernel computes $\alpha$-entmax over the selected tokens only. It estimates the entmax threshold $\tau$ using a histogram-based initialization followed by Halley iterations, which provide fast refinement for the normalization constraint. Since the computation is restricted to the selected pages, the dominant memory traffic scales with the sparse token budget rather than the full context length.

\paragraph{Gaussian-aware selector.}
The Gaussian-aware implementation adds a distributional selection stage before sparse decoding. It computes page-level score means and variances independently for each batch element and attention head, solves the approximate normalization equation for $\hat{\tau}$ in parallel, and then evaluates the Gaussian page-maximum criterion in Eq.~\eqref{eq:gaussian-selector-main}. These operations use only page statistics and therefore avoid loading the full KV cache. The selected page indices and estimated threshold are then passed to the decode kernel. Inside the kernel, we perform one additional Halley refinement using the actual selected scores, improving the threshold estimate while adding little overhead. As in the top-$k$ variant, the kernel consumes a sparse page-index list and directly reads the corresponding KV pages from the paged cache.

\section{Experiments}
\label{sec:experiments}

We evaluate \methodname along three axes. First, we measure approximation quality through support retention, dropped probability mass, and relative output error. Second, we evaluate downstream long-context behavior on passkey retrieval, RULER~\citep{hsieh2024rulerwhatsrealcontext}, and language-modeling perplexity. Third, we report decode-only efficiency using wall-time measurements and speedups over full-cache attention baselines. For model-based benchmarks, we use the 1B-parameter softmax and entmax NAPE models from AdaSplash-2~\citep{gonccalves2026adasplash}.
Unless otherwise stated, sparse methods are compared at matched KV budgets. We focus primarily on top-$k$ selection rather than top-$p$~\citep{lin2025twilight}. Entmax already induces compact support, so a fixed candidate budget often captures most of the relevant probability mass without introducing an additional probability threshold. This allows us to isolate the effect of entmax sparsity while avoiding the extra overhead and tuning associated with adaptive top-$p$ selection.
Further details on the experimental setup can be found in Appendix~\ref{app:experimental_setup}.

\subsection{Approximation Error Analysis}
\label{sec:approximation-error-experiments}

\begin{figure}[t]
    \centering
    \includegraphics[width=0.65\linewidth]{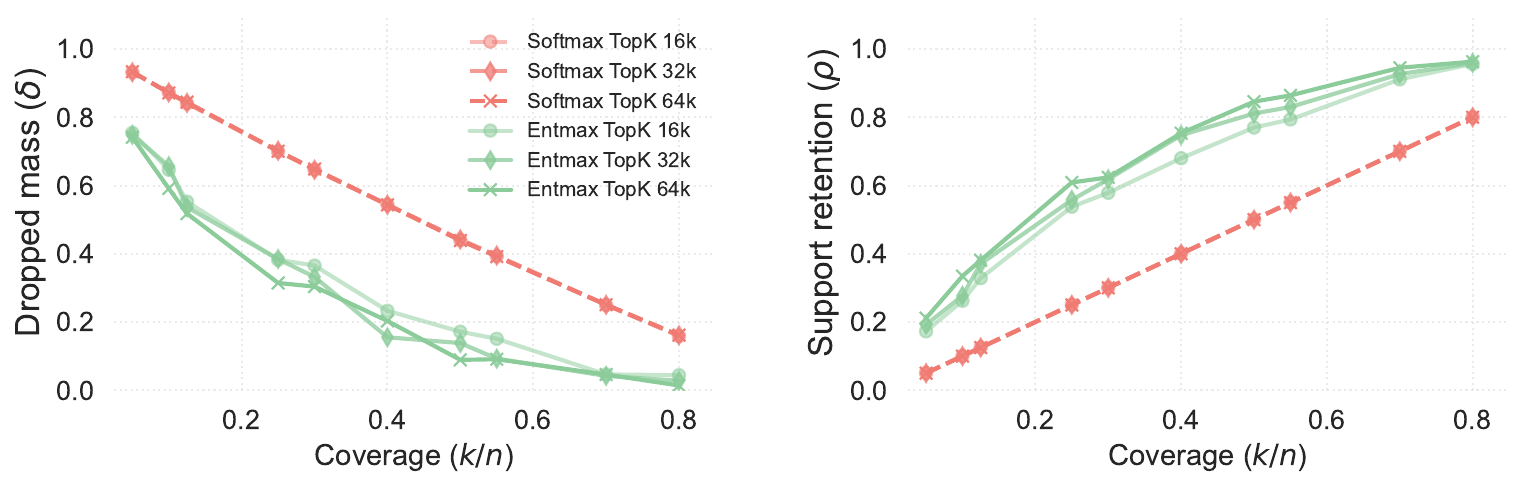}
    \hfill
    \includegraphics[width=0.335\linewidth]{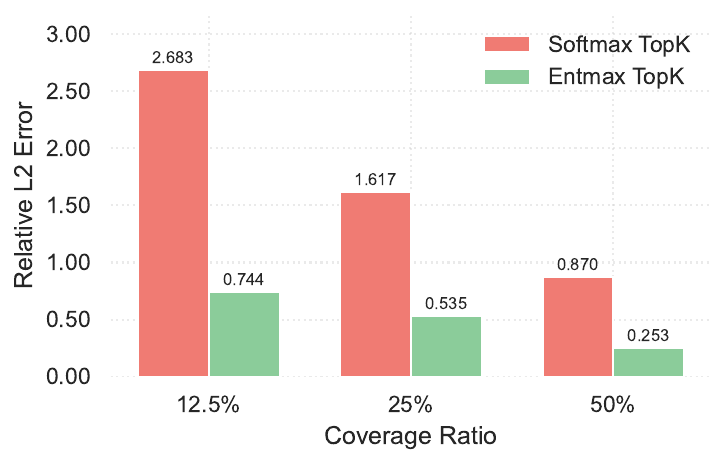}
    \caption{Attention approximation quality in terms of the dropped probability mass $\delta$ (left), support retention $\rho$ (middle),
    and relative output error $\|\bm{o}-\tilde{\bm{o}}\|_2/\|\bm{o}\|_2$ (right).
    Coverage ratio denotes the fraction of the KV cache used in the attention computation. 
    }
    \label{fig:relative_error}
\end{figure}

Figure~\ref{fig:relative_error} compares sparse softmax and sparse entmax at matched coverage ratios. Entmax-native selection consistently drops less probability mass and retains more support tokens, with the largest gains appearing in the most aggressive budget regimes. This behavior is reflected in the output error: sparse entmax produces substantially smaller relative error than sparse softmax at the same coverage. These results agree with Proposition~\ref{prop:truncation-bound}, which shows that truncation error is controlled by the dropped mass $\delta$. They also support the support-recovery interpretation of sparse entmax decoding: once the selected pages contain the entmax support, the approximation error disappears.

\subsection{Passkey Retrieval}
\label{sec:passkey}

\begin{figure}[t]
    \centering
    \includegraphics[width=1\textwidth]{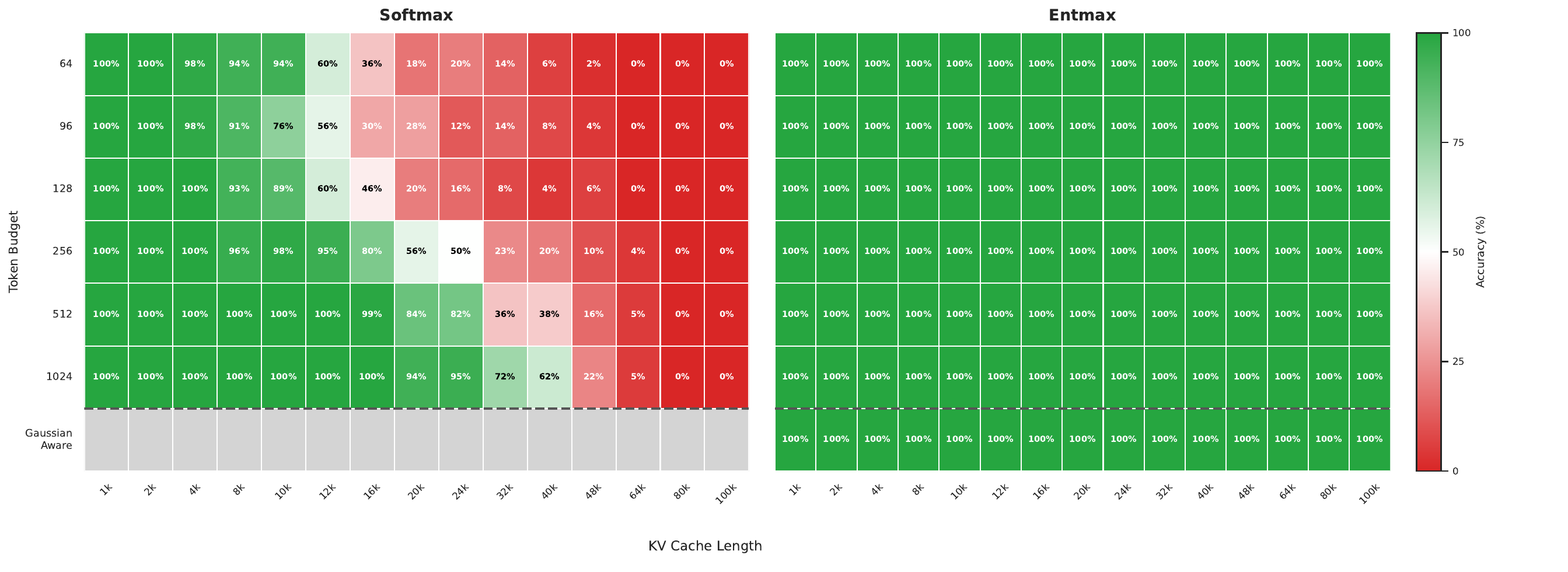}
    \caption{Passkey retrieval accuracy across KV-cache lengths and token budgets. The left panel shows softmax top-$k$ sparse decoding; the right part shows entmax top-$k$ and Gaussian-aware variants.}
    \label{fig:passkey_heatmap}
\end{figure}

Figure~\ref{fig:passkey_heatmap} shows a clear separation between softmax and entmax under small sparse budgets. Softmax sparse attention degrades rapidly as the sequence length grows, even when the selected token budget is increased. In contrast, entmax-based sparse decoding maintains perfect retrieval across substantially longer contexts, in some cases using less than $0.1\%$ of the KV cache. These results support the central hypothesis of the paper: because entmax attention has finite support, sparse decoding can remain exact or near-exact when the selected pages cover that support.

\subsection{Language Modeling Perplexity}
\label{sec:perplexity}

\begin{wrapfigure}{r}{0.4\linewidth}
    \vspace*{-0.7cm}
    \centering
    \includegraphics[width=\linewidth]{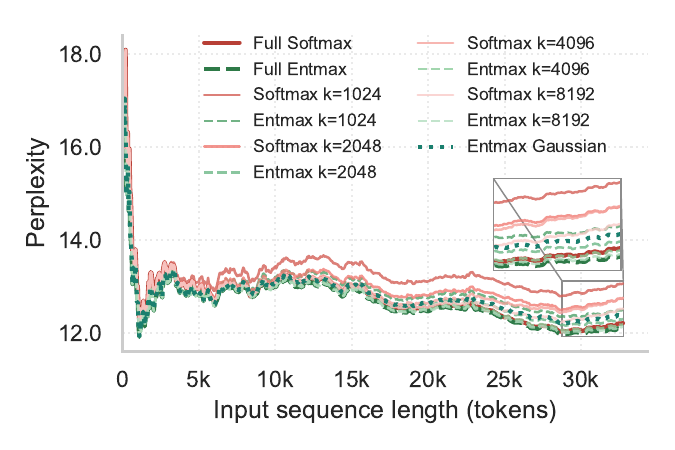}
    \caption{Perplexity numbers on PG19. In contrast to softmax sparse decoding, entmax sparse decoding is close to full entmax across budgets.}
    \label{fig:perplexity}
\end{wrapfigure}
We evaluate language-modeling perplexity on PG19~\citep{rae2019compressive} using two 1B-parameter models with 32k context windows: one trained with softmax attention and one trained with entmax attention. We compare full-cache decoding against sparse decoding with token budgets of 1024, 2048, 4096, and 8192.
Figure~\ref{fig:perplexity} shows that entmax sparse decoding remains close to full-cache entmax even at aggressive budgets. By contrast, softmax sparse decoding incurs a larger relative degradation. Notably, entmax with a 1024-token budget achieves lower absolute perplexity than softmax with substantially larger budgets. This suggests that entmax sparsity not only improves approximation quality at the attention level, but also translates into stronger language-modeling behavior under sparse decoding.

\subsection{RULER Benchmarks}
\label{sec:ruler}

\begin{table*}[t]
  \caption{RULER benchmark results for softmax and entmax models across multiple context lengths ($n$). Sparse variants use a 1K-token budget with either top-$k$ or Gaussian-aware entmax selection.  Bold and underline indicate best and second-best average score within each context length.}
  \label{tab:ruler_results}
  \centering
  \small
  \setlength{\tabcolsep}{2.7pt}
  \begin{tabular}{llccccccccccccccc}
    \toprule
    $n$ & \bf Method &
    \bf mk1 & \bf mk2 & \bf mk3 & \bf mq & \bf mv &
    \bf niah1 & \bf niah2 & \bf niah3 &
    \bf cwe & \bf fwe & \bf vt & \bf qa1 & \bf qa2 &
    \bf avg \\
    \midrule

    4k & \small{Softmax}
    & 84.4 & 39.8 & 10.0 & 20.4 & 16.3 & 100.0 & 100.0 & 100.0 & 28.0 & 30.3 & 14.3 & 22.6 & 37.3 & 46.4 \\
    4k & \small{Softmax (topk)}
    & 79.6 & 37.8 & 5.2 & 35.2 & 22.0 & 100.0 & 100.0 & 99.8 & 10.8 & 24.6 & 9.5 & 22.4 & 38.8 & 45.1 \\
    4k & \small{Entmax}
    & 81.8 & 13.0 & 29.6 & 56.3 & 60.6 & 100.0 & 100.0 & 76.4 & 48.5 & 58.1 & 37.7 & 29.8 & 37.4 & \bf 56.1 \\
    4k & \small{Entmax (topk)}
    & 81.4 & 21.4 & 22.6 & 60.1 & 63.4 & 100.0 & 100.0 & 75.6 & 40.4 & 57.9 & 36.8 & 28.4 & 36.4 & \underline{55.7} \\
    4k & \small{Entmax (Gaus.)}
    & 82.2 & 12.8 & 28.6 & 56.1 & 62.4 & 100.0 & 99.8 & 75.4 & 40.4 & 56.3 & 37.1 & 29.8 & 36.6 & 55.2 \\

    \midrule

    8k & \small{Softmax}
    & 76.0 & 35.0 & 11.2 & 27.0 & 26.2 & 100.0 & 100.0 & 98.8 & 16.4 & 23.2 & 10.4 & 20.8 & 20.7 & 43.5 \\
    8k & \small{Softmax (topk)}
    & 70.2 & 33.0 & 1.4 & 29.3 & 25.0 & 100.0 & 93.6 & 86.4 & 5.5 & 23.9 & 7.0 & 19.6 & 15.3 & 39.3 \\
    8k & \small{Entmax}
    & 73.6 & 6.2 & 19.2 & 63.5 & 32.8 & 100.0 & 100.0 & 82.0 & 34.3 & 47.1 & 39.2 & 28.6 & 25.8 & \bf 50.2 \\
    8k & \small{Entmax (topk)}
    & 72.2 & 18.2 & 5.8 & 61.5 & 39.7 & 100.0 & 100.0 & 83.2 & 15.0 & 40.5 & 38.5 & 27.0 & 25.0 & 48.2 \\
    8K & \small{Entmax (Gaus.)}
    &  73.0 & 4.0 & 15.6 & 60.7 & 32.4 & 100.0 & 100.0 & 82.0 & 32.1 & 47.1 & 27.0 & 26.7 & 39.5 & \underline{49.3} \\

    \midrule

    16k & \small{Softmax}
    & 82.6 & 28.8 & 10.4 & 22.6 & 25.8 & 100.0 & 100.0 & 98.8 & 15.0 & 23.9 & 9.7 & 19.4 & 20.8 & 42.9 \\
    16k & \small{Softmax (topk)}
    & 54.4 & 4.0 & 0.0 & 4.3 & 11.2 & 99.0 & 88.2 & 48.6 & 2.5 & 9.2 & 2.1 & 19.0 & 16.2 & 27.6 \\
    16k & \small{Entmax}
    & 77.8 & 1.2 & 9.4 & 37.8 & 22.4 & 100.0 & 100.0 & 71.4 & 23.0 & 49.3 & 35.0 & 27.6 & 27.7 & \bf 44.8 \\
    16k & \small{Entmax (topk)}
    & 73.4 & 8.4 & 1.2 & 33.0 & 20.6 & 100.0 & 98.8 & 71.2 & 4.4 & 42.4 & 33.4 & 23.6 & 25.6 & 41.2 \\
    16k & \small{Entmax (Gaus.)} & 75.4 & 2.6 & 6.2 & 37.9 & 21.7 & 100.0 & 99.8 & 72.8 & 23.5 & 48.0 & 34.32 & 26.8 & 27.0 & \underline{44.3} \\ 
    \bottomrule
  \end{tabular}
\end{table*}

Table~\ref{tab:ruler_results} shows that entmax outperforms softmax across context lengths, and that the gap is especially pronounced under sparse decoding. At 16k, softmax with a 1024-token budget suffers a large drop relative to full-cache softmax. Entmax with the same budget remains substantially closer to full-cache entmax. This indicates that the support-recovery behavior observed in the approximation metrics carries over to downstream long-context tasks. Both Entmax variants closely match the baseline performance, moreover, the Gaussian variant is able to avoid the CWE/FWE degradation inherent to Top-$k$ approaches shown in MagicPIG \citep{chen2024magicpig}.

\subsection{Efficiency Benchmarks}
\label{sec:efficiency}

Following Quest's kernel-level efficiency evaluation~\citep{tang2024quest}, we measure decoding-time efficiency with synthetic single-token decode benchmarks. For each KV-cache length, we generate query, key, and value tensors from a standard normal distribution and measure the latency of one autoregressive decoding step. This isolates the bottleneck targeted by \methodname: reading and processing the KV cache for a new query. Our benchmark directly times the attention and selection path, making differences attributable to page scoring, candidate selection, KV access, and entmax computation. 
As full-cache baselines, we compare against softmax FlashDecoding and full-cache entmax decoding.\footnote{Softmax attention benefits from highly optimized decoding kernels such as FlashDecoding~\citep{dao2023flashdecoding}, but no comparable decoding kernel currently exists for entmax attention. For the full-cache entmax reference, we therefore materialize the full score matrix and apply entmax row-wise using the efficient Triton implementation from AdaSplash-2~\citep{gonccalves2026adasplash}.} 
When reporting sparse softmax baselines, we use Quest's top-$k$ page-selection implementation in Triton, so that softmax and entmax sparse variants use the same paged metadata and selection infrastructure.
For \methodname, we experiment with the top-$k$ page and Gaussian-aware selection variants.\looseness=-1

Figure~\ref{fig:wall_time_speedups} reports end-to-end wall-clock decoding time normalized by full-cache softmax implemented in FlashDecoding~\cite{dao2023flashdecoding}. 
Full entmax becomes increasingly expensive with context length, reaching $1.6\times$ the softmax time at 1M tokens. Entmax top-$k$ reduces this overhead by avoiding full-cache KV reads, but its latency still grows with context length, mainly due to page scoring and sparse paged attention. 
The Gaussian-aware entmax selector scales best among the entmax variants at long contexts. 
Specifically, while its distributional selection stage adds overhead at shorter lengths, it avoids the explicit top-$k$ bottleneck and grows more slowly with sequence length. At 1M tokens, it runs in $3.06$ms, achieving a $3.36\times$ speedup over full-cache softmax FlashDecoding and a $5.43\times$ speedup over the full entmax reference. Overall, \methodname shifts decoding cost from full-cache memory traffic to lightweight page selection and sparse paged entmax computation.

\begin{figure}[t]
    \centering
    \includegraphics[width=1\linewidth]{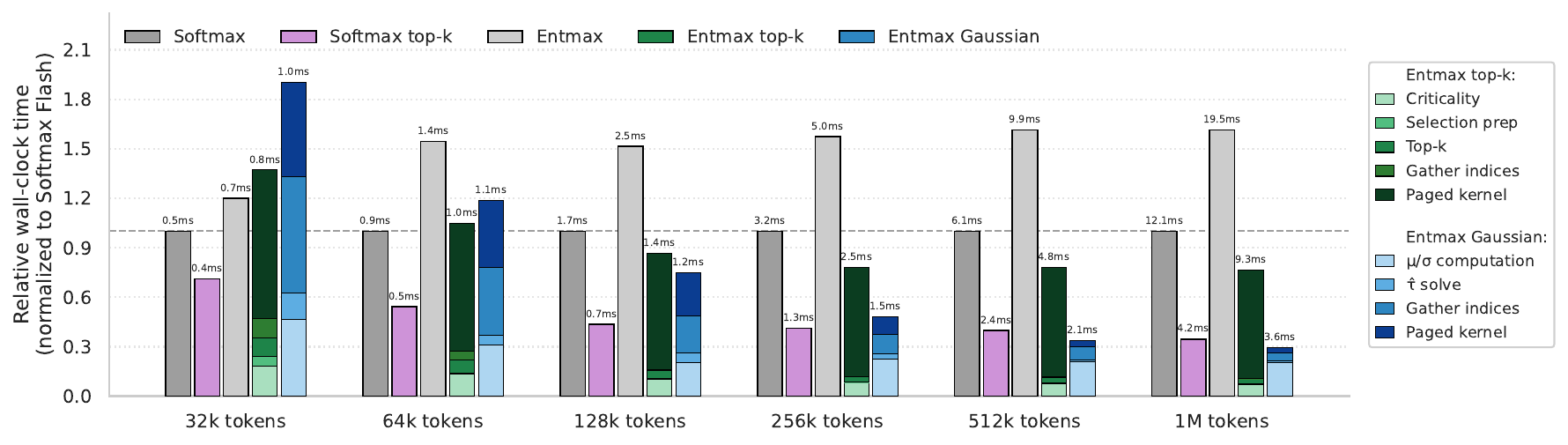}
    \caption{Wall-clock time speedup relative to Softmax (FlashDecoding) at different KV-cache lengths. 
    We also report the actual wall-clock time (in ms) on top of each bar for each method.}
    \label{fig:wall_time_speedups}
\end{figure}

\section{Related Work}
\label{sec:related_works}

\paragraph{KV-cache compression and sparse decoding.}
For autoregressive generation, several methods reduce KV-cache cost by evicting or compressing tokens deemed unimportant. H2O evicts tokens based on cumulative attention scores~\citep{zhang2023ho}; StreamingLLM preserves attention sinks and a sliding window~\citep{xiao2024efficient}; TOVA removes tokens according to current-query attention behavior~\citep{oren-etal-2024-transformers}; and SnapKV compresses prompts by selecting important KV states~\citep{li2024snapkv}. These methods reduce memory footprint, but risk losing information needed by future queries.
A complementary line of work keeps the full cache available and selects a query-dependent subset at decoding time. Quest uses page-level min/max key bounds to identify critical pages without scanning the full cache~\citep{tang2024quest}. 
Twilight extends query-aware selection by first forming a top-$k$ candidate pool and then applying top-$p$ filtering within that pool~\citep{lin2025twilight}. Other methods use clustering, dynamic sparse patterns, or compression-based approximations to reduce attention computation during inference~\citep{singhania2024loki,chen2024magicpig,nawrot2024dynamic,zhu2025tactic,devoto-etal-2024-simple}. Our work belongs to this query-aware family, but differs in that the selection objective is tailored to $\alpha$-entmax support recovery rather than softmax tail approximation.

\paragraph{Sparse probability mappings and entmax attention.}
Sparsemax and $\alpha$-entmax are alternatives to softmax that can assign exactly zero probability to low-scoring entries~\citep{martins2016softmax,peters-etal-2019-sparse,blondel-entmax}. Adaptively sparse transformers learn or tune the sparsity parameter across heads, allowing attention distributions to interpolate between dense and sparse regimes~\citep{correia-etal-2019-adaptively}. ASentmax introduces input-dependent learnable temperatures and argues that sparse attention improves length generalization by mitigating attention dispersion, representational collapse, and over-squashing~\citep{vasylenko2025long}.

\paragraph{Efficient entmax kernels.}
AdaSplash introduced GPU kernels for $\alpha$-entmax attention~\citep{goncalves2025adasplash}. AdaSplash-2 further accelerates entmax training through a histogram-based initialization and efficient block skipping, showing strong long-context results on benchmarks such as RULER and HELMET~\citep{gonccalves2026adasplash}. Our work is orthogonal. While AdaSplash kernels make sparse attention practical during training, decoding still reads the full KV cache before sparsity is known.
Instead, \methodname exploits entmax sparsity during autoregressive decoding by selecting KV pages before they are loaded, reducing KV-cache memory movement while preserving exactness whenever the support is captured.\looseness=-1

\section{Conclusion}
\label{sec:conclusion}

We introduced \methodname, an entmax-native sparse decoding framework for long-context inference. 
The key observation is that $\alpha$-entmax turns sparse decoding from dense-tail approximation into support recovery: if the selected KV pages contain the entmax support, sparse decoding is exact. 
\methodname realizes this idea by combining query-aware page scoring, support-aware candidate selection, and sparse entmax attention for autoregressive decoding.
We showed that truncation error is controlled by the dropped probability mass $\delta$, and that this error vanishes when the entmax support is recovered. We also introduced a Gaussian-aware entmax selector that estimates the entmax threshold from page-level statistics, allowing the candidate budget to adapt to the score distribution.
Empirically, \methodname retains more support tokens, drops less probability mass, and achieves lower output error than softmax-based sparse decoding at matched KV budgets, while improving decoding efficiency over full entmax and dense attention baselines.

\medskip

\bibliography{refs}

@InProceedings{blondel-entmax,
  title = 	 {Learning Classifiers with Fenchel-Young Losses: Generalized Entropies, Margins, and Algorithms},
  author =       {Blondel, Mathieu and Martins, Andre and Niculae, Vlad},
  booktitle = 	 {Proceedings of the Twenty-Second International Conference on Artificial Intelligence and Statistics},
  pages = 	 {606--615},
  year = 	 {2019},
  editor = 	 {Chaudhuri, Kamalika and Sugiyama, Masashi},
  volume = 	 {89},
  series = 	 {Proceedings of Machine Learning Research},
  month = 	 {16--18 Apr},
  publisher =    {PMLR},
  url = 	 {https://proceedings.mlr.press/v89/blondel19a.html}
}

@inproceedings{correia-etal-2019-adaptively,
    title = "Adaptively Sparse Transformers",
    author = "Correia, Gon{\c{c}}alo M.  and
      Niculae, Vlad  and
      Martins, Andr{\'e} F. T.",
    editor = "Inui, Kentaro  and
      Jiang, Jing  and
      Ng, Vincent  and
      Wan, Xiaojun",
    booktitle = "Proceedings of the 2019 Conference on Empirical Methods in Natural Language Processing and the 9th International Joint Conference on Natural Language Processing (EMNLP-IJCNLP)",
    month = nov,
    year = "2019",
    address = "Hong Kong, China",
    publisher = "Association for Computational Linguistics",
    url = "https://aclanthology.org/D19-1223/",
    doi = "10.18653/v1/D19-1223",
    pages = "2174--2184"
}

@inproceedings{rae2019compressive,
    title={Compressive Transformers for Long-Range Sequence Modelling},
    author={Jack W. Rae and Anna Potapenko and Siddhant M. Jayakumar and Chloe Hillier and Timothy P. Lillicrap},
    booktitle={International Conference on Learning Representations},
    year={2020},
    url={https://openreview.net/forum?id=SylKikSYDH}
}

@inproceedings{kwon2023efficient,
  title={Efficient memory management for large language model serving with pagedattention},
  author={Kwon, Woosuk and Li, Zhuohan and Zhuang, Siyuan and Sheng, Ying and Zheng, Lianmin and Yu, Cody Hao and Gonzalez, Joseph and Zhang, Hao and Stoica, Ion},
  booktitle={Proceedings of the 29th Symposium on Operating Systems Principles},
  pages={611--626},
  year={2023}
}

@inproceedings{devoto-etal-2024-simple,
    title = "A Simple and Effective $L\_2$ Norm-Based Strategy for {KV} Cache Compression",
    author = "Devoto, Alessio  and
      Zhao, Yu  and
      Scardapane, Simone  and
      Minervini, Pasquale",
    editor = "Al-Onaizan, Yaser  and
      Bansal, Mohit  and
      Chen, Yun-Nung",
    booktitle = "Proceedings of the 2024 Conference on Empirical Methods in Natural Language Processing",
    month = nov,
    year = "2024",
    address = "Miami, Florida, USA",
    publisher = "Association for Computational Linguistics",
    url = "https://aclanthology.org/2024.emnlp-main.1027/",
    doi = "10.18653/v1/2024.emnlp-main.1027",
    pages = "18476--18499",
}

@inproceedings{xiao2024efficient,
    title={Efficient Streaming Language Models with Attention Sinks},
    author={Guangxuan Xiao and Yuandong Tian and Beidi Chen and Song Han and Mike Lewis},
    booktitle={The Twelfth International Conference on Learning Representations},
    year={2024},
    url={https://openreview.net/forum?id=NG7sS51zVF}
}

@article{chen2024magicpig,
  title={Magicpig: Lsh sampling for efficient llm generation},
  author={Chen, Zhuoming and Sadhukhan, Ranajoy and Ye, Zihao and Zhou, Yang and Zhang, Jianyu and Nolte, Niklas and Tian, Yuandong and Douze, Matthijs and Bottou, Leon and Jia, Zhihao and others},
  journal={arXiv preprint arXiv:2410.16179},
  year={2024}
}

@inproceedings{peters-etal-2019-sparse,
    title = "Sparse Sequence-to-Sequence Models",
    author = "Peters, Ben  and
      Niculae, Vlad  and
      Martins, Andr{\'e} F. T.",
    booktitle = "Proceedings of the 57th Annual Meeting of the Association for Computational Linguistics",
    month = jul,
    year = "2019",
    address = "Florence, Italy",
    publisher = "Association for Computational Linguistics",
    url = "https://www.aclweb.org/anthology/P19-1146",
    doi = "10.18653/v1/P19-1146",
    pages = "1504--1519",
}

@inproceedings{martins2016softmax,
  title = 	 {From Softmax to Sparsemax: A Sparse Model of Attention and Multi-Label Classification},
  author = 	 {Andre Martins and Ramon Astudillo},
  pages = 	 {1614--1623},
  year = 	 {2016},
  editor = 	 {Maria Florina Balcan and Kilian Q. Weinberger},
  volume = 	 {48},
  booktitle ={International Conference on Machine Learning (ICML)},
  series = 	 {Proceedings of Machine Learning Research},
  address = 	 {New York, New York, USA},
  month = 	 {20--22 Jun},
  publisher =    {PMLR},
  url = 	 {http://proceedings.mlr.press/v48/martins16.html}
}

@inproceedings{goncalves2025adasplash,
    title={AdaSplash: Adaptive Sparse Flash Attention},
    author={Nuno Gon{\c{c}}alves and Marcos V Treviso and Andre Martins},
    booktitle={Forty-second International Conference on Machine Learning},
    year={2025},
    url={https://openreview.net/forum?id=OWIPDWhUcO}
}

@article{vasylenko2025long,
  title={Long-Context Generalization with Sparse Attention},
  author={Vasylenko, Pavlo and Treviso, Marcos and Martins, Andr{\'e} FT},
  journal={arXiv preprint arXiv:2506.16640},
  year={2025}
}

@inproceedings{
tang2024quest,
title={{QUEST}: Query-Aware Sparsity for Efficient Long-Context {LLM} Inference},
author={Jiaming Tang and Yilong Zhao and Kan Zhu and Guangxuan Xiao and Baris Kasikci and Song Han},
booktitle={Forty-first International Conference on Machine Learning},
year={2024},
url={https://openreview.net/forum?id=KzACYw0MTV}
}

@inproceedings{zhang2023ho,
title={H2O: Heavy-Hitter Oracle for Efficient Generative Inference of Large Language Models},
author={Zhenyu Zhang and Ying Sheng and Tianyi Zhou and Tianlong Chen and Lianmin Zheng and Ruisi Cai and Zhao Song and Yuandong Tian and Christopher Re and Clark Barrett and Zhangyang Wang and Beidi Chen},
booktitle={Thirty-seventh Conference on Neural Information Processing Systems},
year={2023},
url={https://openreview.net/forum?id=RkRrPp7GKO}
}

@inproceedings{oren-etal-2024-transformers,
    title = "Transformers are Multi-State {RNN}s",
    author = "Oren, Matanel  and
      Hassid, Michael  and
      Yarden, Nir  and
      Adi, Yossi  and
      Schwartz, Roy",
    editor = "Al-Onaizan, Yaser  and
      Bansal, Mohit  and
      Chen, Yun-Nung",
    booktitle = "Proceedings of the 2024 Conference on Empirical Methods in Natural Language Processing",
    month = nov,
    year = "2024",
    address = "Miami, Florida, USA",
    publisher = "Association for Computational Linguistics",
    url = "https://aclanthology.org/2024.emnlp-main.1043/",
    doi = "10.18653/v1/2024.emnlp-main.1043",
    pages = "18724--18741"
}

@inproceedings{
lin2025twilight,
title={Twilight: Adaptive Attention Sparsity with Hierarchical Top-\$p\$  Pruning},
author={Chaofan Lin and Jiaming Tang and Shuo Yang and Hanshuo Wang and Tian Tang and Boyu Tian and Ion Stoica and Song Han and Mingyu Gao},
booktitle={The Thirty-ninth Annual Conference on Neural Information Processing Systems},
year={2025},
url={https://openreview.net/forum?id=Ve693NkzcU}
}

@inproceedings{singhania2024loki,
 author = {Singhania, Prajwal and Singh, Siddharth and He, Shwai and Feizi, Soheil and Bhatele, Abhinav},
 booktitle = {Advances in Neural Information Processing Systems},
 doi = {10.52202/079017-0532},
 editor = {A. Globerson and L. Mackey and D. Belgrave and A. Fan and U. Paquet and J. Tomczak and C. Zhang},
 pages = {16692--16723},
 publisher = {Curran Associates, Inc.},
 title = {Loki: Low-rank Keys for Efficient Sparse Attention},
 url = {https://proceedings.neurips.cc/paper_files/paper/2024/file/1e027da6bec9ceb2ec37951ceeccae93-Paper-Conference.pdf},
 volume = {37},
 year = {2024}
}

@inproceedings{nawrot2024dynamic,
title={Dynamic Memory Compression: Retrofitting {LLM}s for Accelerated Inference},
author={Piotr Nawrot and Adrian {\L}a{\'n}cucki and Marcin Chochowski and David Tarjan and Edoardo Ponti},
booktitle={Forty-first International Conference on Machine Learning},
year={2024},
url={https://openreview.net/forum?id=tDRYrAkOB7}
}

@article{li2024snapkv,
  title={Snapkv: Llm knows what you are looking for before generation},
  author={Li, Yuhong and Huang, Yingbing and Yang, Bowen and Venkitesh, Bharat and Locatelli, Acyr and Ye, Hanchen and Cai, Tianle and Lewis, Patrick and Chen, Deming},
  journal={Advances in Neural Information Processing Systems},
  volume={37},
  pages={22947--22970},
  year={2024}
}

@article{zhu2025tactic,
  title={Tactic: Adaptive sparse attention with clustering and distribution fitting for long-context llms},
  author={Zhu, Kan and Tang, Tian and Xu, Qinyu and Gu, Yile and Zeng, Zhichen and Kadekodi, Rohan and Zhao, Liangyu and Li, Ang and Krishnamurthy, Arvind and Kasikci, Baris},
  journal={arXiv preprint arXiv:2502.12216},
  year={2025}
}

@article{meta2025llama,
  title={The llama 4 herd: The beginning of a new era of natively multimodal ai innovation},
  author={Meta, AI},
  journal={https://ai. meta. com/blog/llama-4-multimodal-intelligence/, checked on},
  volume={4},
  number={7},
  pages={2025},
  year={2025}
}

@article{team2025kimi,
  title={Kimi linear: An expressive, efficient attention architecture},
  author={The Kimi Team},
  journal={arXiv preprint arXiv:2510.26692},
  year={2025}
}

@misc{dao2023flashdecoding,
  title        = {Flash-Decoding for long-context inference},
  author       = {Dao, Tri and Haziza, Daniel and Massa, Francisco and Sizov, Grigory},
  year         = {2023},
  month        = {10},
  day          = {12},
  howpublished = {\emph{CRFM Blog}},
  url          = {https://crfm.stanford.edu/2023/10/12/flashdecoding.html},
  note         = {Accessed: 2025-12-01}
}

@article{gonccalves2026adasplash,
  title={AdaSplash-2: Faster Differentiable Sparse Attention},
  author={Gon{\c{c}}alves, Nuno and Pitorro, Hugo and Niculae, Vlad and Ponti, Edoardo and Li, Lei and Martins, Andre and Treviso, Marcos},
  journal={arXiv preprint arXiv:2604.15180},
  year={2026}
}

@misc{hsieh2024rulerwhatsrealcontext,
      title={RULER: What's the Real Context Size of Your Long-Context Language Models?}, 
      author={Cheng-Ping Hsieh and Simeng Sun and Samuel Kriman and Shantanu Acharya and Dima Rekesh and Fei Jia and Yang Zhang and Boris Ginsburg},
      year={2024},
      eprint={2404.06654},
      archivePrefix={arXiv},
      primaryClass={cs.CL},
      url={https://arxiv.org/abs/2404.06654}, 
}
\bibliographystyle{abbrvnat}

\newpage
\appendix

\begin{figure}[t]
    \centering
    \includegraphics[width=1\linewidth]{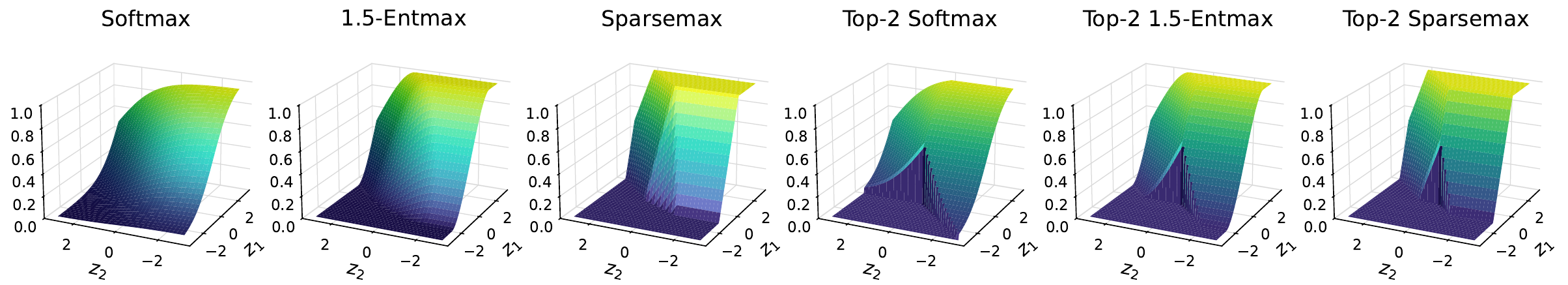}
    \caption{Visualization of softmax and $\alpha$-entmax for different values of $\alpha$, alongside top-$k$ variants with $k=2$.
    Each panel shows how $p_0$ varies for the input $\bm{z} = [0, z_1, z_2]$.
    }
    \label{fig:entmax_visualizations_app}
\end{figure}

\section{The \texorpdfstring{$\alpha$}{alpha}-entmax Transformation}
\label{app:entmax}

For $\alpha>1$, the $\alpha$-entmax transformation~\citep{peters-etal-2019-sparse} maps scores $\bm{s}\in\mathbb{R}^n$ to a probability distribution $\bm{p}\in\triangle_n$ of the form
\begin{align}
    p_i
    &=
    \left[
        (\alpha-1)s_i-\tau
    \right]_+^{1/(\alpha-1)},
    \label{eq:app-entmax-def}
\end{align}
where $\tau\in\mathbb{R}$ is chosen so that
\begin{align}
    \sum_{i=1}^{n}
    p_i
    &=
    1.
\end{align}
The notation $[x]_+=\max\{x,0\}$ is applied element-wise.
The support of $\bm{p}$ is
\begin{align}
    \simark
    &=
    \supp(\bm{p})
    =
    \left\{
        i:
        p_i>0
    \right\}
    \\
    &=
    \left\{
        i:
        (\alpha-1)s_i>\tau
    \right\}. \nonumber
    \label{eq:app-entmax-support}
\end{align}
This threshold characterization is central to our method since a token contributes to entmax attention if and only if its scaled score is above the threshold.
Special cases include sparsemax at $\alpha=2$~\citep{martins2016softmax} and softmax in the limit $\alpha\to1$~\citep{peters-etal-2019-sparse,blondel-entmax}. For $\alpha>1$, the transformation can produce exact zeros, unlike softmax. $\alpha$ controls the propensity to sparsity, with larger values of $\alpha$ generally encouraging sparser distributions.
Figure~\ref{fig:entmax_visualizations_app} illustrates $\alpha$-entmax for $\alpha \in \{1.0, 1.5, 2.0\}$ and top-$k$ variants ($k=2$) for a 3D input.

\section{Additional Details for \methodname}
\label{app:method-proofs}

This appendix provides additional details for the page scoring and Gaussian selection rules used in Section~\ref{sec:method}.

\subsection{Deterministic Page Score Bound}
\label{app:page-bound-proof}

Recall that page $p$ stores coordinate-wise key minima and maxima:
\begin{align}
    k_{\min,i}^{(p)}
    =
    \min_{j\in\mathcal{P}_p}
    k_{j,i},
    \qquad
    k_{\max,i}^{(p)}
    =
    \max_{j\in\mathcal{P}_p}
    k_{j,i}.
\end{align}
For any token $j\in\mathcal{P}_p$,
\begin{align}
    k_{j,i}
    &\in
    \left[
        k_{\min,i}^{(p)},
        k_{\max,i}^{(p)}
    \right].
\end{align}

\begin{proposition}[Page score upper bound]
\label{prop:page-bound}
For any query $\bm{q}$ and token $j\in\mathcal{P}_p$,
\begin{align}
    s_j(\bm{q})
    &=
    \frac{\bm{q}^{\top}\bm{k}_j}{\sqrt d}
    \le
    \bar{s}_{\mathrm{box}}^{(p)}(\bm{q}),
\end{align}
where
\begin{align}
    \bar{s}_{\mathrm{box}}^{(p)}(\bm{q})
    &=
    \frac{1}{\sqrt d}
    \sum_{i=1}^{d}
    \max
    \left(
        q_i k_{\min,i}^{(p)},
        q_i k_{\max,i}^{(p)}
    \right).
\end{align}
\end{proposition}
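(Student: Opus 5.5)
The plan is to prove the bound coordinate by coordinate and then sum. Write the score as a sum of per-coordinate contributions,
\begin{align}
    s_j(\bm{q}) = \frac{1}{\sqrt d}\sum_{i=1}^{d} q_i k_{j,i}.
\end{align}
Since $1/\sqrt d>0$ and the box bound is also a sum over coordinates of the same form, it suffices to show that for each $i$,
\begin{align}
    q_i k_{j,i} \le \max\left(q_i k_{\min,i}^{(p)},\, q_i k_{\max,i}^{(p)}\right).
\end{align}
Summing these $d$ inequalities and multiplying by $1/\sqrt d$ then gives the claim.

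For the per-coordinate inequality, I would use the membership $k_{j,i}\in[k_{\min,i}^{(p)},k_{\max,i}^{(p)}]$ noted just before the statement. This holds by definition of the element-wise min and max over $\mathcal{P}_p$, since $j\in\mathcal{P}_p$. The map $x\mapsto q_i x$ is linear, hence monotone on the interval, so its maximum over the interval is attained at an endpoint. Concretely, I would split into cases. If $q_i\ge 0$, then $q_i k_{j,i}\le q_i k_{\max,i}^{(p)}$. If $q_i<0$, multiplying the inequality $k_{j,i}\ge k_{\min,i}^{(p)}$ by $q_i$ reverses it, giving $q_i k_{j,i}\le q_i k_{\min,i}^{(p)}$. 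In either case the right-hand side is at most the maximum of the two endpoint products.

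There is no real obstacle here. The only point needing care is the sign case for $q_i<0$, where the inequality flips and the minimum key coordinate becomes the relevant endpoint. It is also worth remarking that the bound maximizes each coordinate independently, so it can be loose: different coordinates may achieve their maxima at different tokens of the page. This looseness does not affect validity, since we only need an upper bound on $s_j$.
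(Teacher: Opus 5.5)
Your proposal is correct and follows essentially the same route as the paper's proof. Both bound each coordinate term $q_i k_{j,i}$ using $k_{j,i}\in[k_{\min,i}^{(p)},k_{\max,i}^{(p)}]$ with the same sign split on $q_i$, then sum over coordinates and divide by $\sqrt d$.
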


\begin{proof}
For each coordinate $i$, the scalar $k_{j,i}$ lies between the coordinate-wise minimum and maximum of the page. Therefore,
\begin{align}
    q_i k_{j,i}
    &\le
    \max
    \left(
        q_i k_{\min,i}^{(p)},
        q_i k_{\max,i}^{(p)}
    \right).
\end{align}
If $q_i\ge0$, the maximum is attained at $k_{\max,i}^{(p)}$; if $q_i<0$, it is attained at $k_{\min,i}^{(p)}$. Summing over coordinates gives
\begin{align}
    \bm{q}^{\top}\bm{k}_j
    &=
    \sum_{i=1}^{d}
    q_i k_{j,i}
    \\
    &\le
    \sum_{i=1}^{d}
    \max
    \left(
        q_i k_{\min,i}^{(p)},
        q_i k_{\max,i}^{(p)}
    \right). \nonumber
\end{align}
Dividing by $\sqrt d$ proves the result.
\end{proof}

Importantly, this bound is conservative because it maximizes each coordinate independently over an axis-aligned box. While the maximizing vector may not correspond to an actual key, the bound is cheap and safe to compute.

\subsection{Support-Superset Selection from Page Bounds}
\label{app:support-superset}

\begin{proposition}[No false negatives from deterministic page bounds]
\label{prop:page-support-superset}
Let $\bm{p}=\entmax(\bm{s})$ with threshold $\tau$, and suppose $\hat{\tau}\le\tau$. Define
\begin{align}
    \mathcal{C}_{\mathrm{page}}
    &=
    \left\{
        p:
        (\alpha-1)
        \bar{s}_{\mathrm{box}}^{(p)}
        >
        \hat{\tau}
    \right\}.
\end{align}
Then every page containing an entmax support token is selected:
\begin{align}
    \supp(\bm{p})
    &\subseteq
    \bigcup_{p\in\mathcal{C}_{\mathrm{page}}}
    \mathcal{P}_p.
\end{align}
\end{proposition}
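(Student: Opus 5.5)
The plan is to chain three inequalities: the support characterization, Proposition~\ref{prop:page-bound}, and the assumption $\hat{\tau}\le\tau$.

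Fix an arbitrary support token $j\in\supp(\bm{p})$. Since every token index in $[n]$ belongs to exactly one page, let $p$ be the page with $j\in\mathcal{P}_p$. It then suffices to show that $p\in\mathcal{C}_{\mathrm{page}}$, because $j\in\mathcal{P}_p$ would then place $j$ in the union $\bigcup_{p\in\mathcal{C}_{\mathrm{page}}}\mathcal{P}_p$.

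First, use the threshold characterization of the support in Eq.~\eqref{eq:entmax-support} (equivalently Appendix~\ref{app:entmax}). Membership $j\in\supp(\bm{p})$ means $p_j=[(\alpha-1)s_j-\tau]_+^{1/(\alpha-1)}>0$, which holds exactly when $(\alpha-1)s_j>\tau$.

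Second, invoke Proposition~\ref{prop:page-bound} to get $s_j\le\bar{s}_{\mathrm{box}}^{(p)}(\bm{q})$. Since $\alpha>1$, multiplying by $\alpha-1>0$ preserves the inequality, so $(\alpha-1)\bar{s}_{\mathrm{box}}^{(p)}\ge(\alpha-1)s_j$.

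Third, combine these with $\hat{\tau}\le\tau$:
\begin{align}
    (\alpha-1)\bar{s}_{\mathrm{box}}^{(p)}
    \ge
    (\alpha-1)s_j
    >
    \tau
    \ge
    \hat{\tau}.
\end{align}
This inequality is exactly the selection condition, so $p\in\mathcal{C}_{\mathrm{page}}$.

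There is no real obstacle here. The only point needing care is strictness: the support condition is strict, and the chain keeps one strict inequality, which matches the strict ``$>$'' in the definition of $\mathcal{C}_{\mathrm{page}}$. I would also note that the sign of $\alpha-1$ is what allows scaling the bound. As a corollary, combining this result with Proposition~\ref{prop:exact-sparse} shows that sparse decoding over the selected pages is exact.
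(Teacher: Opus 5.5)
Your proof is correct and matches the paper's argument step for step: the support condition $(\alpha-1)s_j>\tau$, the box bound from Proposition~\ref{prop:page-bound} scaled by $\alpha-1$, and the assumption $\hat{\tau}\le\tau$ combine into the strict selection inequality. You also state explicitly that $\alpha-1>0$ is what lets you scale the bound, which the paper uses without saying.
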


\begin{proof}
Let $i\in\supp(\bm{p})$, and let $p(i)$ be the page containing token $i$. Since $i$ is in the entmax support,
\begin{align}
    (\alpha-1)s_i
    &>
    \tau.
\end{align}
Because $\hat{\tau}\le\tau$,
\begin{align}
    (\alpha-1)s_i
    &>
    \hat{\tau}.
\end{align}
By Proposition~\ref{prop:page-bound},
\begin{align}
    s_i
    &\le
    \bar{s}_{\mathrm{box}}^{(p(i))}.
\end{align}
Therefore,
\begin{align}
    (\alpha-1)
    \bar{s}_{\mathrm{box}}^{(p(i))}
    &\ge
    (\alpha-1)s_i
    \\
    &>
    \hat{\tau}. \nonumber
\end{align}
Thus page $p(i)$ is selected. Since this holds for every support token, all support tokens lie in selected pages.
\end{proof}

This proposition gives an ideal deterministic picture: if page maxima are upper-bounded and the threshold estimate is conservative, page selection cannot miss support tokens. The Gaussian selector trades this worst-case guarantee for adaptivity.

\section{Approximation Error Under Truncation}
\label{app:approx-truncation}

This appendix proves the truncation results used in Section~\ref{sec:theory}.

\subsection{Proof of Proposition~\ref{prop:truncation-bound}}

\begin{proof}
Let $\delta > 0$, and recall that $\delta=\sum_{i\in\Idrop}p_i$ and
\begin{align}
    \tilde{p}_i
    &=
    \begin{cases}
        p_i/(1-\delta), & i\in\Ikeep,\\
        0, & i\in\Idrop.
    \end{cases}
\end{align}
We first compute the $L^1$ distance between $\bm{p}$ and $\tilde{\bm{p}}$. On dropped indices,
\begin{align}
    \sum_{i\in\Idrop}
    |p_i-\tilde{p}_i|
    =
    \sum_{i\in\Idrop}
    p_i
    =
    \delta.
\end{align}
On kept indices,
\begin{align}
    \sum_{i\in\Ikeep}
    |\tilde{p}_i-p_i|
    &=
    \sum_{i\in\Ikeep}
    \left(
        \frac{p_i}{1-\delta}
        -
        p_i
    \right)
    \\ \nonumber
    &=
    \left(
        \frac{1}{1-\delta}
        -
        1
    \right)
    \sum_{i\in\Ikeep}
    p_i
    \\ \nonumber
    &=
    \frac{\delta}{1-\delta}
    (1-\delta)
    \\ \nonumber
    &=
    \delta.
\end{align}
Therefore,
\begin{align}
    \|\bm{p}-\tilde{\bm{p}}\|_1
    &=
    2\delta.
\end{align}

Now,
\begin{align}
    \|\bm{o}-\tilde{\bm{o}}\|_2
    &=
    \left\|
        \sum_{i=1}^{n}
        (p_i-\tilde{p}_i)\bm{v}_i
    \right\|_2
    \\ \nonumber
    &\le
    \sum_{i=1}^{n}
    |p_i-\tilde{p}_i|
    \|\bm{v}_i\|_2
    \\ \nonumber
    &\le
    B
    \|\bm{p}-\tilde{\bm{p}}\|_1
    \\ \nonumber
    &=
    2B\delta.
\end{align}

To see tightness, consider two tokens with probabilities $p_1=1-\delta$ and $p_2=\delta$, and keep only token the first token. 
Let $\bm{v}_1=B\bm{u}$ and $\bm{v}_2=-B\bm{u}$ for a unit vector $\bm{u}$. Then
\begin{align}
    \bm{o}
    =
    (1-\delta)B\bm{u}
    -
    \delta B\bm{u}
    =
    (1-2\delta)B\bm{u},
    \qquad
    \tilde{\bm{o}}
    =
    B\bm{u}.
\end{align}
Thus
\begin{align}
    \|\bm{o}-\tilde{\bm{o}}\|_2
    &=
    2B\delta.
\end{align}
\end{proof}

\subsection{Proof of Proposition~\ref{prop:exact-sparse}}
\label{app:exact-sparse}

\begin{proof}
If $\simark\subseteq\Ikeep$, then every dropped index has zero entmax probability:
\begin{align}
    i\in\Idrop
    &\implies
    i\notin\simark
    \\ \nonumber
    &\implies
    p_i=0.
\end{align}
Therefore,
\begin{align}
    \delta
    &=
    \sum_{i\in\Idrop}
    p_i
    =
    0.
\end{align}
Since $\delta=0$, the renormalized distribution satisfies $\tilde{\bm{p}}=\bm{p}$. Hence
\begin{align}
    \tilde{\bm{o}}
    &=
    \sum_{i=1}^{n}
    \tilde{p}_i\bm{v}_i
    =
    \sum_{i=1}^{n}
    p_i\bm{v}_i
    =
    \bm{o}.
\end{align}
\end{proof}

\section{Closed-Form Distributional Entmax Thresholds for Special \texorpdfstring{$\alpha$}{alpha}}
\label{app:detg-closed-form}

Recall from Section~\ref{sec:candidate-selection} that the Gaussian-aware entmax selector estimates the entmax threshold by replacing the empirical sum over token scores with a page-wise Gaussian approximation. The exact entmax normalization condition is
\begin{align}
    \sum_{i=1}^{n}
    \left[
        (\alpha-1)s_i-\tau
    \right]_+^\beta
    =
    1,
    \quad \text{with} \quad 
    \beta
    =
    \frac{1}{\alpha-1}.
\end{align}
For each page $p$, we model token scores by a scalar random variable
\begin{align}
    S^{(p)}
    &\sim
    \mathcal{N}
    \left(
        \mu_p,
        \sigma_p^2
    \right).
\end{align}
Under this model, the normalization condition is approximated by
\begin{align}
    \sum_{p=1}^{M}
    |\mathcal{P}_p|
    \,
    \E_{S^{(p)}}
    \left[
        g_\alpha(S^{(p)};\tau)
    \right]
    &=
    1,
    \label{eq:app-det-page-equation}
\end{align}
where
\begin{align}
    g_\alpha(s;\tau)
    &=
    \left[
        (\alpha-1)s-\tau
    \right]_+^\beta.
\end{align}
Thus, each page contributes its expected unnormalized entmax mass, multiplied by the number of tokens in the page.
Let $a=\alpha-1$. For a generic Gaussian score $S\sim\mathcal{N}(\mu,\sigma^2)$, define
\begin{align}
    Y
    &=
    aS-\tau.
\end{align}
Then
\begin{align}
    Y
    \sim
    \mathcal{N}
    \left(
        \mu_Y,
        \sigma_Y^2
    \right),
    \quad
    \mu_Y
    =
    a\mu-\tau,
    \quad
    \sigma_Y
    =
    a\sigma.
\end{align}
Since
\begin{align}
    g_\alpha(S;\tau)
    &=
    (Y)_+^\beta,
\end{align}
we need the truncated Gaussian moment
\begin{align}
    \E
    \left[
        g_\alpha(S;\tau)
    \right]
    &=
    \E
    \left[
        (Y)_+^\beta
    \right]
    =
    \E
    \left[
        Y^\beta
        \mathbf{1}\{Y>0\}
    \right].
\end{align}
Let
\begin{align}
    t
    &=
    \frac{\mu_Y}{\sigma_Y},
\end{align}
and let $\phi$ and $\Phi$ denote the standard normal pdf and cdf. When $\beta\in\mathbb{N}$, this expectation is a truncated Gaussian moment with a closed form. We use the cases $\beta=1,2,3$, corresponding to $\alpha=2,3/2,4/3$.\footnote{These values are convenient because $\beta=1/(\alpha-1)$ is an integer. For non-integer $\beta$, the same distributional threshold equation remains valid, but the expectation generally requires numerical evaluation.}

\paragraph{Case $\alpha=2$ ($\beta=1$).}
Here $a=1$ and
\begin{align}
    g_2(s;\tau)
    &=
    (s-\tau)_+.
\end{align}
The first truncated Gaussian moment is
\begin{align}
    \E
    \left[
        Y\mathbf{1}\{Y>0\}
    \right]
    &=
    \mu_Y\Phi(t)
    +
    \sigma_Y\phi(t).
\end{align}
Therefore,
\begin{align}
    \E
    \left[
        g_2(S;\tau)
    \right]
    &=
    (\mu-\tau)
    \Phi
    \left(
        \frac{\mu-\tau}{\sigma}
    \right)
    +
    \sigma
    \phi
    \left(
        \frac{\mu-\tau}{\sigma}
    \right).
\end{align}

\paragraph{Case $\alpha=3/2$ ($\beta=2$).}
Here $a=1/2$ and
\begin{align}
    g_{3/2}(s;\tau)
    &=
    \left(
        \frac{1}{2}s-\tau
    \right)_+^2.
\end{align}
The second truncated Gaussian moment is
\begin{align}
    \E
    \left[
        Y^2\mathbf{1}\{Y>0\}
    \right]
    &=
    \left(
        \mu_Y^2+\sigma_Y^2
    \right)
    \Phi(t)
    +
    \mu_Y\sigma_Y\phi(t).
\end{align}
Thus,
\begin{align}
    \E
    \left[
        g_{3/2}(S;\tau)
    \right]
    &=
    \left(
        \mu_Y^2+\sigma_Y^2
    \right)
    \Phi
    \left(
        \frac{\mu_Y}{\sigma_Y}
    \right)
    +
    \mu_Y\sigma_Y
    \phi
    \left(
        \frac{\mu_Y}{\sigma_Y}
    \right),
    \\
    \text{with } \mu_Y
    &=
    \frac{1}{2}\mu-\tau,
    \text{ and }
    \sigma_Y
    =
    \frac{1}{2}\sigma.
\end{align}

\paragraph{Case $\alpha=4/3$ ($\beta=3$).}
Here $a=1/3$ and
\begin{align}
    g_{4/3}(s;\tau)
    &=
    \left(
        \frac{1}{3}s-\tau
    \right)_+^3.
\end{align}
The third truncated Gaussian moment is
\begin{align}
    \E
    \left[
        Y^3\mathbf{1}\{Y>0\}
    \right]
    &=
    \left(
        \mu_Y^3
        +
        3\mu_Y\sigma_Y^2
    \right)
    \Phi(t)
    +
    \left(
        \mu_Y^2\sigma_Y
        +
        2\sigma_Y^3
    \right)
    \phi(t).
    \label{eq:app-third-trunc-moment}
\end{align}
Therefore,
\begin{align}
    \E
    \left[
        g_{4/3}(S;\tau)
    \right]
    &=
    \left(
        \mu_Y^3
        +
        3\mu_Y\sigma_Y^2
    \right)
    \Phi
    \left(
        \frac{\mu_Y}{\sigma_Y}
    \right)
    +
    \left(
        \mu_Y^2\sigma_Y
        +
        2\sigma_Y^3
    \right)
    \phi
    \left(
        \frac{\mu_Y}{\sigma_Y}
    \right),
    \\
    \text{with } \mu_Y
    &=
    \frac{1}{3}\mu-\tau,
    \text{ and }
    \sigma_Y
    =
    \frac{1}{3}\sigma.
\end{align}

\paragraph{Solving for the distributional threshold.}
For each page $p$, we evaluate the appropriate closed form using $\mu=\mu_p$ and $\sigma=\sigma_p$, then substitute the result into Eq.~\eqref{eq:app-det-page-equation}. The estimated threshold $\hat{\tau}$ is the solution of the scalar equation
\begin{align}
    \sum_{p=1}^{M}
    |\mathcal{P}_p|
    \,
    \E_{S^{(p)}}
    \left[
        g_\alpha(S^{(p)};\hat{\tau})
    \right]
    &=
    1.
\end{align}
We solve this equation with a small fixed number of Newton or Halley iterations over $\tau$. Each iteration has constant cost per page because each page contributes only through $\mu_p$, $\sigma_p$, and evaluations of $\phi$ and $\Phi$. Importantly, this step does not require loading the full KV cache.
For generic $\alpha>1$ with non-integer $\beta$, the same distributional equation can still be used, but the expectation $\E[g_\alpha(S;\tau)]$ no longer reduces to a low-order polynomial truncated Gaussian moment. In that case, the expectation can be evaluated numerically.

\section{Experiments}
\label{app:additional_experiments}

\subsection{Setup}
\label{app:experimental_setup}

We ran all experiments on single nodes equipped with NVIDIA RTX A6000 GPUs. For the attention approximation quality, relative error, and efficiency experiments, synthetic inputs were sampled from a standard normal distribution using \texttt{torch.randn}. These synthetic benchmarks isolate the behavior of the attention transformation, page-selection mechanism, and decoding kernels independently of data-loading or full-model generation overhead. Unless otherwise stated, we use batch size $8$, page size $16$, and report results averaged over multiple random trials or timed iterations. For model-based benchmarks, we use the 1B-parameter softmax and entmax NAPE models from AdaSplash-2~\citep{gonccalves2026adasplash}.

\paragraph{Decode-only efficiency benchmark.}
Our efficiency benchmark focuses on autoregressive decoding, not prefill. Following the kernel-level decode evaluation style of Quest~\citep{tang2024quest}, we measure the latency of a single-token decode step while varying the KV-cache length. For each configuration, we generate synthetic query, key, and value tensors from a standard normal distribution and time only the attention/selection path for one new query. This directly targets the bottleneck addressed by \methodname: reading and processing the KV cache during decoding. It also makes differences between methods attributable to page scoring, candidate selection, KV access, and entmax computation, rather than to unrelated full-model components.

We compare five full and sparse decoding variants: full-cache softmax FlashDecoding, top-$k$ softmax, a full-cache entmax reference, \methodname with top-$k$ page selection, and \methodname with the Gaussian-aware entmax selector. 
For top-$k$ softmax, we use Quest's top-$k$ page-selection implementation in Triton so that softmax and entmax variants share the same paged metadata and selection infrastructure. 
The \methodname top-$k$ variant uses the same Quest-style page scores, but computes $\alpha$-entmax over the selected pages. The Gaussian-aware entmax variant instead uses per-page score moments to estimate the entmax threshold and select pages before invoking the sparse paged decoding kernel.
For the full-cache baselines, softmax is implemented with FlashDecoding~\citep{dao2023flashdecoding}. Since no comparable optimized dense decoding kernel currently exists for entmax attention, the full-cache entmax reference materializes the full score matrix and applies entmax row-wise using the efficient Triton implementation from AdaSplash-2~\citep{gonccalves2026adasplash}, namely, via the \texttt{triton\_entmax\_v2} kernel.

\paragraph{Latency measurement protocol.}
All latency measurements follow a warm-up and timed-iteration protocol to ensure stable GPU readings. Each benchmark configuration first executes warm-up iterations to account for JIT compilation, CUDA kernel caching, and memory allocation overhead. Timing is then collected over repeated synchronized iterations. Each timed iteration is bracketed by \texttt{torch.cuda.synchronize()} calls so that the measured interval includes completed GPU work rather than only kernel-launch latency. We report wall-clock latency in milliseconds. Latency is computed as the median over timed iterations, reducing sensitivity to occasional system-level outliers.

\paragraph{Why use Ampere GPUs for benchmarking.}
We benchmark on NVIDIA Ampere GPUs, following the benchmarking setup used in AdaSplash-2~\citep{gonccalves2026adasplash}. On Ampere, FlashDecoding is the relevant optimized softmax attention reference. While recent version of FlashAttention (and thus for FlashDecoding) have introduced hardware-specific optimizations for NVIDIA Hopper GPUs, including TMA, WGMMA instructions, and warp specialization, they preserve the same high-level attention algorithm: online softmax. 
Since our goal is to measure the algorithmic effect of avoiding full-cache KV reads during decoding, rather than the effect of Hopper-specific kernel engineering, Ampere provides a cleaner comparison point. 
We note that the low-level optimizations introduced by recent FlashDecoding versions are largely orthogonal to \methodname and could be incorporated in future hardware-specific implementations.

\paragraph{Approximation error analysis.}
Unless otherwise specified, we compare sparse decoding variants under matched KV budgets. The softmax baselines use softmax attention with query-aware top-$k$ selection. The entmax variants use $\alpha$-entmax attention and compute sparse entmax over the selected KV entries. We report support retention $\rho$, dropped probability mass $\delta$, and relative output error $R=\|\bm{o}-\tilde{\bm{o}}\|_2/\|\bm{o}\|_2$. Coverage ratio denotes the fraction of the KV cache used in sparse attention for top-$k$ variants.
For the top-$k$ approximation experiments, both softmax and entmax receive randomly generated query, key, and value tensors. This isolates the effect of the probability transformation and sparse selection rule on the attention output. For the Gaussian-aware entmax selector, we additionally evaluate page-statistic behavior using query, key, and value projections sampled from trained entmax models, including early, middle, and late layers, to capture different sparsity regimes.

\paragraph{Passkey retrieval.}
We run the passkey retrieval task in a setup similar to Quest~\citep{tang2024quest}. For each sequence length, we repeat the experiment across multiple passkey positions, expressed as depth ratios within the input sequence, and include an additional randomized depth setting. Accuracy is reported as the percentage of correct answers for each model, selector, and token budget.

\paragraph{RULER.}
We evaluate downstream long-context performance on RULER~\citep{hsieh2024rulerwhatsrealcontext} using the 1B-parameter softmax and entmax NAPE models from AdaSplash-2~\citep{gonccalves2026adasplash}. The models were trained up to 32k context length and use NAPE with ALiBi bias. RULER tasks test retrieval and reasoning over long sequences, making them suitable for evaluating whether sparse decoding preserves long-context behavior.

\paragraph{Language-modeling perplexity.}
We evaluate perplexity on the PG19 test set~\citep{rae2019compressive}. Models are prompted with sequences of up to 32k tokens from PG19, and perplexity is computed under full-cache and sparse decoding variants. This experiment tests whether sparse decoding preserves the model's next-token distribution on natural long-form text, complementing the synthetic approximation metrics and retrieval-style benchmarks.

\subsection{Additional Results for Gaussian Entmax}
\label{subsubsec:gaussian_entmax_results_approx}

We show in Table~\ref{tab:gaussian_quality} results in terms dropped probability mass, support retention, and relative output error.

\begin{table}[!htb]
\centering
\small
\caption{Gaussian approximation  in terms of the dropped probability mass, support retention $\rho$ (middle),
    and relative output error $R=\|\bm{o}-\tilde{\bm{o}}\|_2/\|\bm{o}\|_2$, and coverage ratio} 
\begin{tabular}{lcccc}
\toprule
KV size & $\delta$ & $\rho$ & $R$ & Cov. \\
\midrule
  4k & 0.000000 & 1.0000 & 0.000365 & 0.9949 \\
  8k & 0.000000 & 1.0000 & 0.000253 & 0.9703 \\
 16k & 0.000855 & 0.9977 & 0.002499 & 0.9091 \\
 32k & 0.009254 & 0.9771 & 0.035376 & 0.7759 \\
 64k & 0.037229 & 0.9343 & 0.131196 & 0.6066 \\
 \bottomrule
\end{tabular}

\label{tab:gaussian_quality}
\end{table}

\end{document}